\def\eqref#1{equation~\ref{#1}}
\def\1{\bm{1}}
\def\vv{{\bm{v}}}
\DeclareMathAlphabet{\mathsfit}{\encodingdefault}{\sfdefault}{m}{sl}
\SetMathAlphabet{\mathsfit}{bold}{\encodingdefault}{\sfdefault}{bx}{n}
\def\fA{{\mathcal{A}}}
\def\fB{{\mathcal{B}}}
\def\fC{{\mathcal{C}}}
\def\fG{{\mathcal{G}}}
\def\fM{{\mathcal{M}}}
\def\fS{{\mathcal{S}}}
\newcommand{\R}{\mathbb{R}}
\theoremstyle{definition}
\newcolumntype{L}[1]{>{\raggedright\let\newline\\\arraybackslash\hspace{0pt}}m{#1}}
\newcolumntype{C}[1]{>{\centering\let\newline\\\arraybackslash\hspace{0pt}}m{#1}}
\newcolumntype{R}[1]{>{\raggedleft\let\newline\\\arraybackslash\hspace{0pt}}m{#1}}
\newcommand{\sect}[1]{Section~\ref{#1}}
\newcommand{\sectapp}[1]{Appendix~\ref{#1}}
\newcommand{\fig}[1]{Fig.~\ref{#1}}
\newcommand{\tbl}[1]{Table~\ref{#1}}
\newcommand{\ignore}[1]{}
\DeclareMathAlphabet{\mathbfit}{OML}{cmm}{b}{it}
\DeclareRobustCommand\onedot{\futurelet\@let@token\@onedot}
\def\@onedot{\ifx\@let@token.\else.\null\fi\xspace}
\def\eg{e.g\onedot} 
\def\ie{i.e\onedot}
\definecolor{MyDarkBlue}{rgb}{0,0.08,1}
\definecolor{MyAqua}{rgb}{0,0.7,0.7}
\definecolor{MyDarkGreen}{rgb}{0.02,0.6,0.02}
\definecolor{MyDarkRed}{rgb}{0.8,0.02,0.02}
\definecolor{MyDarkOrange}{rgb}{0.40,0.2,0.02}
\definecolor{MyPurple}{RGB}{111,0,255}
\definecolor{MyRed}{rgb}{1.0,0.0,0.0}
\definecolor{MyGold}{rgb}{0.75,0.6,0.12}
\definecolor{MyDarkgray}{rgb}{0.66, 0.66, 0.66}
\newcommand{\xhdr}[1]{{\noindent \bfseries #1}}
\titlespacing*{\subsection}{0pt}{0pt plus 1pt minus 1pt}{0pt plus 1pt minus 1pt}
\newtheorem{theorem}{Theorem}[section]
\newtheorem{corollary}[theorem]{Corollary}
\newtheorem{lemma}[theorem]{Lemma}
\newtheorem{definition}{Definition}[section]
\newtheorem{conjecture}[theorem]{Conjecture}
\numberwithin{equation}{section}
\newcommand{\nn}{\mathrm{NN}}
\newcommand{\agg}{\mathrm{AGG}}
\newcommand{\concat}{\mathrm{CONCAT}}
\newcommand{\received}{\mathrm{Received}}
\newcommand{\neighbors}{\mathrm{neighbors}}
\newcommand{\modelfull}{Neural Logic Machines\xspace}
\newcommand{\model}{NLM\xspace}
\newcommand{\modelfamily}{NLM}
\newcommand{\modelp}{NLMs\xspace}
\title{On the Expressiveness and Generalization of\\ Hypergraph Neural Networks}
\author[Z. Luo et al.]{%
Zhezheng Luo \qquad
Jiayuan Mao \qquad
Joshua B. Tenenbaum \qquad
Leslie Pack Kaelbling\\
\institute{Massachusetts Institute of Technology}\\
\email{\{ezzluo,jiayuanm,jbt\}@mit.edu~~lpk@csail.mit.edu}
}
\begin{document}

\maketitle
\begin{abstract}
This extended abstract describes a framework for analyzing the expressiveness, learning, and (structural) generalization of hypergraph neural networks (HyperGNNs). Specifically, we focus on how HyperGNNs can learn from finite datasets and generalize structurally to graph reasoning problems of arbitrary input sizes.  Our first contribution is a fine-grained analysis of the expressiveness of HyperGNNs, that is, the set of functions that they can realize. Our result is a hierarchy of problems they can solve, defined in terms of various hyperparameters such as depths and edge arities. Next, we analyze the learning properties of these neural networks, especially focusing on how they can be trained on a finite set of small graphs and generalize to larger graphs, which we term structural generalization. Our theoretical results are further supported by the empirical results.
\end{abstract}
\section{Introduction}

Reasoning over graph-structured data is an important task in many applications, including molecule analysis, social network modeling, and knowledge graph reasoning~ \citep{gilmer2017neural,schlichtkrull2018modeling,liu2019hyperbolic}.
While we have seen great success of various relational neural networks, such as Graph Neural Networks \citep[GNNs; ][]{scarselli2008graph} and Neural Logical Machines \citep[NLM; ][]{dong2018neural} in a variety of applications~\citep{battaglia2018relational, merkwirth2005automatic, velivckovic2019neural}, we do not yet have a full understanding of how different design parameters, such as the depth of the neural network, affects the expressiveness of these models, or how effectively these models generalize from limited data.

This paper analyzes the {\it expressiveness} and {\it generalization} of relational neural networks applied to {\it hypergraphs}, which are graphs with edges connecting more than two nodes.
Literature has shown that even when the inputs and outputs of models have only unary and binary relations, allowing intermediate hyperedge representations increases the expressiveness~\citep{azizian2020expressive,bodnar2021weisfeiler}. In this paper, we further formally show the ``if and only if'' conditions for the expressive power with respect to the edge arity. That is, $k$-ary hyper-graph neural networks are sufficient and necessary for realizing FOC-$k$, a fragment of first-order logic with counting quantification which involves at most $k$ variables. This is a helpful result because now we can determine whether a specific hypergraph neural network can solve a problem by understanding what form of logic formula can represent the solution to this problem. Next, we formally described the relationship between expressiveness and non-constant-depth networks. We state a conjecture about the ``depth hierarchy,'' and connect the potential proof of this conjecture to the distributed computing literature.

Furthermore, we prove, under certain assumptions, it is possible to train a hypergraph neural networks on a finite set of small graphs, and it will generalize to arbitrarily large graphs. This ability results from the weight-sharing nature of hypergraph neural networks.
We hope our work can serve as a foundation for designing hypergraph neural networks: to solve a specific problem, what arity do you need? What depth do you need? Will my model have structural generalization (\ie, to larger graphs)?
Our theoretical results are further supported by experiments, for empirical demonstrations.

\section{Hypergraph Reasoning Problems and Hypergraph Neural Networks}\label{sec:preliminary}

A {\em hypergraph representation} $G$ is a tuple $(V, X)$, where $V$ is a set of entities (nodes), and $X$ is a set of {\it hypergraph representation functions}. Specifically, $X = \{X_{0}, X_{1}, X_{2}, \cdots, X_{k}\}$, where $X_j: (v_1, v_2, \cdots, v_j) \to \fS$ is a function mapping every tuple of $j$ nodes to a value. We call $j$ the {\it arity} of the hyperedge and $k$ is the max arity of input hyperedges. The range $\fS$ can be any set of discrete labels that describes relation type, or a scalar number (\eg, the length of an edge), or a vector.
We will use the arity 0 representation $X_0(\emptyset) \to \fS$ to represent any global properties of the graph.

A {\em graph reasoning function} $f$ is a mapping from a hypergraph representation $G = (V, X)$ to another hyperedge representation function $Y$ on $V$. As concrete examples, asking whether a graph is fully connected is a graph classification problem, where the output $Y = \{Y_0\}$ and $Y_0(\emptyset) \to \fS' = \{0, 1\}$ is a global label; finding the set of disconnected subgraphs of size $k$ is a $k$-ary hyperedge classification problem, where the output $Y = \{Y_k\}$ is a label for each $k$-ary hyperedges.

There are two main motivations and constructions of a neural network applied to graph reasoning problems: message-passing-based and first-order-logic-inspired. Both approaches construct the computation graph layer by layer. The input is the features of nodes and hyperedges, while the output is the per-node or per-edge prediction of desired properties, depending on the task.

In a nutshell, within each layer, {\it message-passing-based} hypergraph neural networks, Higher-Order GNNs~\citep{morris2019weisfeiler}, perform message passing between each hyperedge and its neighbours.  Specifically, we say the j-th neighbour set of a hyperedge $u = (x_1, x_2, \cdots, x_i)$ of arity $i$ is $N_j(u) = \{(x_1, x_2, \cdots, x_{j-1}, r, x_{j+1}, \cdots, x_i)\}$, where $r \in V$. Then, the all neighbours of node $u$ is the union of all $N_j$'s, where $j=1, 2, \cdots, i$.

On the other hand, first-order-logic-inspired hypergraph neural networks consider building neural networks that can emulate first logic formulas. Neural Logic Machines~\citep[NLM;][]{dong2018neural} are defined in terms of a set of input hyperedges;  each hyperedge of arity $k$ is represented by a vector of (possibly real) values obtained by applying all of the k-ary predicates in the domain to the tuple of vertices it connects. Each layer in an NLM learns to apply a linear transformation with nonlinear activation and quantification operators (analogous to the for all $\forall$ and exists $\exists$ quantifiers in first-order logic), on these values. It is easy to prove, by construction, that given a sufficient number of layers and maximum arity, NLMs can learn to realize any first-order-logic formula. For readers who are not familiar with HO-GNNs~\citep{morris2019weisfeiler} and NLMs~\citep{dong2018neural}, we include a mathematical summary of their computation graph in \sectapp{ssec:hg-nlm}. Our analysis starts from the following theorem.
\begin{theorem}
\label{thm:equivalence}
HO-GNNs~\citep{morris2019weisfeiler} are equivalent to NLMs in terms of expressiveness. Specifically, a $B$-ary HO-GNN is equivalent to an NLM applied to $B+1$-ary hyperedges. Proofs are in Appendix~\ref{app:equivalence}.
\end{theorem}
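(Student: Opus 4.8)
\medskip
\noindent\textbf{Proof proposal.}
The plan is to prove the two directions of the equivalence separately, in each case showing that a single layer of one model is emulated by a constant (or at worst $O(B)$) number of layers of the other, with feature dimensions blown up only by constants, and that the input encodings and output read-outs agree. First I would fix the precise layer-wise semantics from \sectapp{ssec:hg-nlm}: an HO-GNN layer updates the feature $h_u^{(t)}$ on each $B$-tuple $u=(x_1,\dots,x_B)$ by $h_u^{(t+1)}=\mathrm{UPDATE}\big(h_u^{(t)},\,\mathrm{AGG}_1\{\!\{\mathrm{MSG}(h_u^{(t)},h_w^{(t)}):w\in N_1(u)\}\!\},\dots,\mathrm{AGG}_B\{\!\{\cdots\}\!\}\big)$, while an NLM layer on predicates of arities $0,\dots,B+1$ applies, arity by arity, expansions (append a fresh last variable), reductions (aggregate out the last variable), coordinate permutations, and a per-tuple MLP.

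\medskip
\noindent\textbf{NLM on $(B+1)$-ary hyperedges simulates a $B$-ary HO-GNN.}
Keep $h^{(t)}$ as an arity-$B$ predicate. For each slot $j$, form by one expansion and one permutation the arity-$(B+1)$ predicate $h^{[j]}(x_1,\dots,x_B,x_{B+1}):=h^{(t)}(x_1,\dots,x_{j-1},x_{B+1},x_{j+1},\dots,x_B)$, together with the dummy expansion $\bar h(x_1,\dots,x_B,x_{B+1}):=h^{(t)}(x_1,\dots,x_B)$. The per-tuple MLP applied to the stack $(\bar h,h^{[1]},\dots,h^{[B]})$ then realizes the message functions on all slots in parallel, and a reduction over $x_{B+1}$ yields exactly the multiset aggregates $\mathrm{AGG}_j$ back at arity $B$; a final concatenation with $h^{(t)}$ and per-tuple MLP realizes $\mathrm{UPDATE}$. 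This uses maximum arity $B+1$ and $O(1)$ NLM layers (or $O(B)$ if the NLM permutation primitive is restricted to transpositions). Initial and final predicates of arity $\le B$ match because a $B$-tuple's HO-GNN feature is a function of the $(\le B)$-ary input predicates on that tuple, which the NLM can assemble and read off.

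\medskip
\noindent\textbf{A $B$-ary HO-GNN simulates an NLM on $(B+1)$-ary hyperedges.}
Encode each NLM predicate of arity $j\le B$ redundantly into the $B$-tuple features, storing $p(y_1,\dots,y_j)$ in the feature of every $B$-tuple whose first $j$ coordinates are $y_1,\dots,y_j$. Under this encoding, every NLM operation that stays within arities $0,\dots,B$ (permutations, the per-tuple MLP, and expand/reduce between adjacent arities $\le B$) is realized directly by channel relabeling, a per-tuple MLP, and a single-slot $N_j$-aggregation. The only new ingredient is the detour through arity $B+1$: in an NLM such a tensor is always created by expansion, possibly permuted and combined by a per-tuple MLP with other expanded tensors, and then reduced back to arity $B$; since expansion merely copies values it commutes with the per-tuple MLP, so any ``expand $\to$ MLP $\to$ reduce'' block is, after a fixed relabeling of coordinates, a map on $(\le B)$-ary predicates that substitutes one ranging witness node into a designated slot of each argument, applies a pointwise function, and aggregates the witness out --- which is exactly one $B$-ary HO-GNN neighbour-aggregation over a single slot.

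\medskip
\noindent\textbf{Main obstacle.}
The delicate point is the last step: making rigorous that the transient arity-$(B+1)$ computation of an NLM carries no power beyond a single $B$-ary neighbour-aggregation, and in particular aligning the variable that is quantified out and the $B$ coordinates that survive with the HO-GNN's coordinate convention. I would isolate this as a lemma --- ``one expand/permute/MLP/reduce block of an NLM with maximum arity $B+1$ equals, after a fixed relabeling of coordinates, one $B$-ary HO-GNN layer'' --- and prove it by tracking that each expanded-and-permuted argument omits at least one of the $B+1$ variables, so that after deleting the reduced variable every argument is supported on a $B$-element sub-tuple that the $N_j$-neighbourhoods reach; this bookkeeping is precisely where the ``$B$ versus $B+1$'' shift in the statement is pinned down. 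Everything else --- universal approximation to realize $\mathrm{MSG}$, $\mathrm{UPDATE}$, and the per-tuple MLPs, and the $O(1)$ versus $O(B)$ layer blow-up --- I expect to be routine.
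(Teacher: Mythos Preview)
Your two-direction simulation matches the paper's proof closely. The HO-GNN $\to$ NLM direction is essentially identical: expand to arity $B{+}1$, permute to align all $B$ neighbour slots with the fresh variable, apply the per-tuple MLP for the message function, reduce out the fresh variable, then a final MLP for the update (the paper bills this as two NLM layers per HO-GNN layer). The reverse direction also hinges on the same key lemma the paper isolates: since $B{+}1$ is the maximal arity, every arity-$(B{+}1)$ feature in an NLM is determined by the arity-$B$ features at the ordered $B$-sub-tuples of the current $(B{+}1)$-tuple, and one HO-GNN neighbour-aggregation over the witness $u$ suffices to collect these.

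There is one technical slip in your reverse direction. Under the encoding you propose --- store $p(y_1,\dots,y_j)$ only at $B$-tuples whose \emph{first} $j$ coordinates are $(y_1,\dots,y_j)$ --- the NLM's Permute operation is \emph{not} a channel relabeling: to read $p(y_{\sigma(1)},\dots,y_{\sigma(j)})$ at the tuple $(y_1,\dots,y_B)$ you must access the feature of a different $B$-tuple, and a single-slot $N_j$-aggregation moves only one coordinate, so a general permutation cannot be realized in one HO-GNN step. The paper avoids this with a richer encoding: at each $B$-tuple it stores the NLM feature of \emph{every ordered sub-tuple} of every arity $b\le B$ (all $\binom{B}{b}\,b!$ of them per arity). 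Under that encoding Permute really is channel relabeling, Expand is free, lower-arity Reduce is one aggregation, and --- crucially for your ``main obstacle'' --- a single HO-GNN aggregation at $(v_1,\dots,v_B)$ over $u=v_{B+1}$ gathers, via the $B{+}1$ tuples $\{(v_1,\dots,v_B)\}\cup\mathrm{Neighbors}((v_1,\dots,v_B),v_{B+1})$ each carrying $B!$ permuted copies, all $(B{+}1)!$ ordered $B$-tuples drawn from $\{v_1,\dots,v_{B+1}\}$, exactly what the arity-$(B{+}1)$ Permute requires. With this amendment your sketch goes through and coincides with the paper's argument.
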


Given Theorem~\ref{thm:equivalence}, we can focus on just one single type of hypergraph neural network. Specifically, we will focus on Neural Logic Machines~\citep[NLM;][]{dong2018neural} because its architecture naturally aligns with first-order logic formula structures, which will aid some of our analysis. An \model{} is characterized by hyperparameters $D$ (depth), and $B$ maximum arity. We are going to assume that $B$ is a constant, but $D$ can be dependent on the size of the input graph. We will use \modelfamily[$D$, $B$] to denote an \model family with depth $D$ and max arity $B$. Other parameters such as the width of neural networks affects the precise details of what functions can be realized, as it does in a regular neural network, but does not affect the analyses in this extended abstract. Furthermore, we will be focusing on neural networks with bounded precision, and briefly discuss how our results generalize to unbounded precision cases. %
\section{Expressiveness of Relational Neural Networks}

We start from a formal definition of hypergraph neural network expressiveness.

\begin{definition}[Expressiveness]
We say a model family $\fM_1$ is {\em at least expressive as} $\fM_2$, written as $\fM_1 \succcurlyeq \fM_2$, if for all $M_2\in \fM_2$, there exists $M_1 \in \fM_1$ such that $M_1$ can realize $M_2$.
A model family $\fM_1$ is {\em more expressive than} $\fM_2$, written as $\fM_1 \succ \fM_2$, if $\fM_1 \succcurlyeq \fM_2$ and
$\exists M_1 \in \fM_1$, $\forall M_2 \in \fM_2$, $M_2$ can not realize $M_1$.
\end{definition}

\xhdr{Arity Hierarchy}
We first aim to quantify how the maximum arity $B$ of the network's representation affects its expressiveness and find that, in short, even if the inputs and outputs of neural networks are of low arity, the higher the maximum arity for intermediate layers, the more expressive the NLM is.

\begin{corollary}[Arity Hierarchy]
For any maximum arity $B$, there exists a depth $D^*$ such that: $\forall D \ge D^*$, \modelfamily[$D$, $B+1$] is more expressive than \modelfamily[$D$, $B$]. This theorem applies to both fixed-precision and unbounded-precision networks. Here, by fixed-precision, we mean that the results of intermediate layers (tensors) are constant-sized (\eg, $W$ bits per entry). Practical GNNs are all fixed-precision because real number types in modern computers have finite precision.
\label{thm:arity-hierarchy}
\end{corollary}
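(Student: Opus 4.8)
The plan is to establish the two requirements in the definition of ``more expressive'' separately. For the inclusion \modelfamily[$D$, $B+1$] $\succcurlyeq$ \modelfamily[$D$, $B$], I would argue by a direct simulation: given $M \in$ \modelfamily[$D$, $B$], build $M' \in$ \modelfamily[$D$, $B+1$] that holds the arity-$(B+1)$ tensor at zero in every layer (its MLP outputs zero regardless of input), and whose arity-$0,\dots,B$ MLPs ignore the contributions reduced down from arity $B+1$ (which vanish anyway), so that they reproduce the layers of $M$ verbatim. Since the input hypergraph has arity at most $B$ and the output arity is unchanged, $M'$ computes the same function as $M$ at the same depth $D$. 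This construction is oblivious to the precision model, so it covers both the fixed- and unbounded-precision cases.

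For the strict part I would exhibit one function realized by \modelfamily[$D$, $B+1$] that no \modelfamily[$D$, $B$] can realize, for every $D$ above a threshold $D^*$. Composing Theorem~\ref{thm:equivalence} with the ``FOC-$k$ iff $k$-ary HyperGNN'' characterization from the introduction, every function computed by a member of \modelfamily[$\cdot$, $B$] is invariant under FOC-$B$ equivalence (equivalently, $C^B$-equivalence, counting logic with $B$ variables), while any fixed FOC-$(B+1)$ sentence $\varphi$ is realized by some member of \modelfamily[$D_0$, $B+1$] with $D_0 = D_0(\varphi)$ finite. Now invoke the Cai--F\"urer--Immerman construction: for each $B$ there is a fixed pair of graphs $G_B, H_B$ with $G_B \equiv_{C^B} H_B$ but $G_B \not\equiv_{C^{B+1}} H_B$, witnessed by a concrete FOC-$(B+1)$ sentence $\varphi_B$ (which, moreover, uses only constant counting thresholds). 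Set $D^* = D_0(\varphi_B)$. For any $D \ge D^*$, pad the realizing network with identity layers to obtain a member of \modelfamily[$D$, $B+1$] computing $\varphi_B$; on the other hand every $M_2 \in$ \modelfamily[$D$, $B$] must return the same value on $G_B$ and on $H_B$, so $M_2$ does not realize $\varphi_B$. Hence \modelfamily[$D$, $B+1$] $\succ$ \modelfamily[$D$, $B$]. Because $\varphi_B$ is Boolean with bounded counting, the realizing network can be taken fixed-precision, and the negative half holds even against unbounded-precision \modelfamily[$\cdot$, $B$], so the statement holds in both precision regimes.

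I expect the main obstacle to be the index bookkeeping that links three different notions --- the arity of hyperedges, the ``dimension'' of the HO-GNN / round of Weisfeiler--Leman, and the number of variables in counting logic --- so that the CFI separation lands precisely between $B$ and $B+1$ rather than off by one; pinning this down requires stating the correspondence of Theorem~\ref{thm:equivalence} with care (and checking which WL convention, $k$-WL versus $k$-FWL, matches the reduce/expand operations of an NLM layer). A secondary point is the depth accounting: one must argue both that a concrete FOC-$(B+1)$ distinguishing formula is realizable at finite depth (following the ``by construction'' realization of logic formulas by NLMs, extended to counting quantifiers) and that padding with identity layers never destroys realizability. If one prefers a self-contained witness to citing CFI, the base case $B = 2$ can be made explicit: \modelfamily[$\cdot$, $2$] has the power of $1$-WL and cannot tell $C_6$ from two disjoint triangles, whereas the three-variable sentence ``there exist $x,y,z$ with $E(x,y)\wedge E(y,z)\wedge E(z,x)$'' separates them and is realized by some \modelfamily[$D_0$, $3$]; the general case then requires the CFI gadget recursion to lift this to every $B$.
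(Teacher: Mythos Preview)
Your proposal is correct and follows essentially the same approach as the paper: the inclusion is by direct simulation (NLM$[D,B+1]$ ignores the top arity), and strictness comes from the Cai--F\"urer--Immerman separating pair together with the equivalence between NLM$[\cdot,B]$ and $(B{-}1)$-dimensional WL / FOC$_B$. The only cosmetic difference is that the paper phrases the separation via the WL test directly while you go through the equivalent counting-logic characterization; you are also slightly more explicit than the paper about how $D^*$ arises (as the depth needed to realize the witness sentence $\varphi_B$, then padding) and about why the argument is precision-agnostic.
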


{\em Proof sketch:} Our proof slightly extends the proof of \citet{morris2019weisfeiler}. First, the set of graphs distinguishable by \modelfamily[$D$, $B$] is bounded by graphs distinguishable by a $D$-round order-$B$ Weisfeiler-Leman test~\citep{leman1968reduction}. If models in \modelfamily[$D$, $B$] cannot generate different outputs for two distinct hypergraphs $G_1$ and $G_2$, but
there exists $M \in \text{\modelfamily}[D, B+1]$
that {\em can} generate different outputs for $G_1$ and $G_2$, then we can construct a graph classification function $f$ that \modelfamily[$D$, $B+1$] (with some fixed precision) can realize but \modelfamily[$D$, $B$] (even with unbounded precision) cannot.\footnote{Note that the arity hierarchy is applied to fixed-precision and unbounded-precision separately. For example, \modelfamily[$D$, $B$] with unbounded precision is incomparable with \modelfamily[$D$, $B+1$] with fixed precision.}
The full proof is described in Appendix~\ref{app:arity-hierarchy}.

It is also important to quantify the minimum arity for realizing certain graph reasoning functions.

\begin{corollary}[FOL realization bounds]
Let FOC$_B$ denote a fragment of first order logic with at most $B$ variables, extended with counting quantifiers of the form $\exists^{\geq_n}\phi$, which state that there are at least $n$ nodes satisfying formula $\phi$~\citep{cai1992optimal}.
\begin{itemize}[noitemsep,topsep=0pt,parsep=0pt,partopsep=0pt,leftmargin=2em]
\item (Upper Bound) Any function $f$ in FOC$_B$ can be realized by \modelfamily[$D$, $B$] for some $D$.
\item (Lower Bound) There exists a function $f \in \text{FOC}_B$ such that for all $D$, $f$ cannot be realized by \modelfamily[$D$,$B-1$].
\end{itemize}
\label{thm:fol-bound}
\end{corollary}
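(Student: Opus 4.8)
The plan is to proceed by structural induction on the FOC$_B$ formula defining $f$, compiling it bottom-up into an \modelfamily{} of maximum arity $B$ (with the depth $D$ determined by the construction). We maintain the invariant that, for every subformula $\psi$ whose free variables lie in a fixed tuple of at most $B$ variables, some intermediate layer carries a hyperedge predicate of the matching arity whose value on an assignment of nodes to those variables equals the truth value of $\psi$. Atomic subformulas come from the input $X$ (relational atoms, equality) or are constants. A Boolean combination of one or two previously-built predicates is produced by a single \modelfamily{} layer whose linear map plus nonlinearity realizes the corresponding Boolean function. A counting quantifier $\exists^{\ge n}x.\,\psi(x,\bar y)$ is handled by first expanding $\psi$ to include the slot for $x$, applying the \modelfamily{} aggregation (reduce) over that slot to obtain the count, and then thresholding at $n$ with one more layer; re-using a variable name amounts to permuting/projecting tensor slots and costs $O(1)$ extra layers. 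Since $f$ is fixed, every threshold $n$ appearing in it is a constant, so $W=O(\log(\text{largest }n))$ bits per entry suffice and the whole construction is valid at fixed precision, while $D$ is bounded by a constant times the size of the parse tree of $f$.

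\textbf{Lower bound.} Here the plan is to reuse the Weisfeiler--Leman upper bound on \modelfamily{} expressiveness from the proof of Corollary~\ref{thm:arity-hierarchy}, together with the variable-number separation of \citet{cai1992optimal}. Concretely: (i) for the relevant parameter, the Cai-F\"urer-Immerman construction yields non-isomorphic graphs $G_1,G_2$ that the order-$(B-1)$ Weisfeiler--Leman test cannot distinguish but that are separated by some sentence $\varphi\in\text{FOC}_B$, say $G_1\models\varphi$ and $G_2\not\models\varphi$; (ii) the graph-classification function $f:G\mapsto[\,G\models\varphi\,]$ then lies in FOC$_B$; (iii) by the Weisfeiler--Leman bound, for every depth $D$ the output of any $M\in\modelfamily[D,B-1]$ is a function of at most $D$ rounds of the order-$(B-1)$ colouring, hence $M$ returns the same answer on $G_1$ and $G_2$ and cannot realize $f$. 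This is the same ``distinguishing pair $\Rightarrow$ non-realizable classification function'' move used in Corollary~\ref{thm:arity-hierarchy}, now instantiated with a CFI pair rather than an arbitrary one, and it works verbatim for both fixed- and unbounded-precision networks.

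\textbf{Main obstacle.} For the upper bound, the delicate points are (a) checking that a single \modelfamily{} layer can implement the Boolean-combination and thresholding steps \emph{exactly} with width polynomial in the formula, and (b) arguing that ``forgetting'' a quantified variable is a legal reduce in the \modelfamily{} computation graph that does not destroy information an outer quantifier still needs---this forces careful bookkeeping of which variable tuple each intermediate predicate is indexed by, and is where the ``at most $B$ variables $\Rightarrow$ arity $B$ suffices'' claim is actually cashed out. For the lower bound, the only real subtlety is the off-by-one correspondence between \modelfamily{} maximum arity, Weisfeiler--Leman order, and number of FOC variables; once this is pinned down (via Theorem~\ref{thm:equivalence} and \citet{morris2019weisfeiler} on one side, and \citet{cai1992optimal} on the other) the CFI graphs supply the separating instance with essentially no further work.
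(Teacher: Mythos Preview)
Your proposal is correct and follows essentially the same approach as the paper. For the upper bound you spell out the structural induction that the paper only gestures at by citing \citet{barcelo2020logical} and remarking that it ``generalizes easily'' via sum aggregation; for the lower bound you use exactly the paper's route---the Weisfeiler--Leman upper bound on \modelfamily{} distinguishing power combined with the \citet{cai1992optimal} equivalence between FOC$_B$ and the appropriate WL dimension, yielding a CFI pair that separates FOC$_B$ from \modelfamily[$\cdot$,$B-1$]---and you correctly flag the off-by-one bookkeeping among \modelfamily{} arity, WL order, and variable count as the one place requiring care.
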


{\em Proof:} The upper bound part of the claim has been proved by \citet{barcelo2020logical} for $B=2$. The results generalize easily to arbitrary $B$ because the counting quantifiers can be realized by sum aggregation. The lower bound part can be proved by applying Section 5 of \cite{cai1992optimal}, in which they show that FOC$_B$ is equivalent to a $(B-1)$-dimensional WL test in distinguishing non-isomorphic graphs. Given that \modelfamily[$D$, $B-1$] is equivalent to the $(B-2)$-dimensional WL test of graph isomorphism, there must be an FOL$_B$ formula that distinguishes two non-isomorphic graphs that \modelfamily[$D$, $B-1$] cannot. Hence, FOL$_B$ cannot be realized by \modelfamily[$\cdot$, $B-1$].

\xhdr{Depth Hierarchy}
We now study the dependence of the expressiveness of \modelp{} on depth $D$. Neural networks are generally defined to have a fixed depth, but allowing them to have a depth that is dependent on the number of nodes $n = \lvert V \rvert$ in the graph, in many cases, can substantially increase their expressive power~\citep[][see also Theorem~\ref{thm:connectivity} and Appendix \ref{app:exp-reduce} for examples]{dehghani2019universal}. In the following, we define a {\em depth hierarchy} by analogy to the {\em time hierarchy} in computational complexity theory~\citep{hartmanis1965computational}, and we extend our notation to let \modelfamily{}$[O(f(n)), B]$ denote the class of adaptive-depth \modelp{} in which the growth-rate of depth $D$ is bounded by $O(f(n))$.

\begin{conjecture}[Depth hierarchy]
For any maximum arity $B$, for any two functions $f$ and $g$, if $g(n) = o(f(n) / \log n)$, that is, $f$ grows logarithmically more quickly than $g$, then
fixed-precision \modelfamily$[O(f(n)), B]$ is more expressive than fixed-precision \modelfamily$[O(g(n)), B]$.
\label{conj:dep-hier}
\end{conjecture}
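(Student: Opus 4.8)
\emph{Proof proposal (the statement is a conjecture; here is the route I would take and the point at which it resists a full proof).}

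First reduce the claim to a separation. The containment $\modelfamily[O(g(n)),B] \succcurlyeq \modelfamily[O(g(n)),B]$ direction of ``more expressive'' is trivial: given a depth-$O(g(n))$ network, append $O(f(n)) - O(g(n))$ identity layers that carry its output forward (this is legal since $g(n)=o(f(n)/\log n)$ implies $g(n)=O(f(n))$), so $\modelfamily[O(f(n)),B] \succcurlyeq \modelfamily[O(g(n)),B]$. It therefore suffices to exhibit a single adaptive-depth, arity-$B$, fixed-precision \model{} of depth $O(f(n))$ whose realized function is realized by \emph{no} adaptive-depth fixed-precision $\modelfamily[O(g(n)),B]$. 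I work in the natural uniform setting: an adaptive-depth network is given by a finite program $P$ that, on an $n$-node graph, outputs the per-layer transformations for at most $c_P\,g(n)+c_P$ layers; enumerate all such programs as $P_1,P_2,\dots$.

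The core object is a \emph{universal} \model{} $U$: on a graph $G$ whose arity-$0$ features (and a designated node gadget) encode a program $P_i$ together with the ``real'' contents of $G$, $U$ simulates $P_i$ on $G$. The key quantitative claim is that $U$ simulates one layer of $P_i$ using $O(\log n)$ of its own layers. The arithmetic of a single \model{} layer (a linear map, a fixed activation, and $\forall/\exists$-style and counting aggregations over tuples) is itself expressible in $O(1)$ \model{} layers up to constant blow-up; the $\log n$ arises entirely from \emph{addressing} under fixed precision: to route or compare a tuple index, which ranges over $[n]^{\le B}$, using only $W$ bits of state per tuple, one needs $O(\log n)$ rounds of bit-serial broadcast/aggregation — this is exactly the $\Theta(\log n)$-bit identifier overhead familiar from the \textsc{Congest}-style distributed models the paper points to. The same budget covers maintaining a simulated-layer counter up to $g(n)$, which must be spread across the nodes since it does not fit in $W$ bits. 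Consequently $U$ reproduces $P_i$'s $O(g(n))$ layers in $O(g(n)\log n)$ of its own layers.

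Now diagonalize. Let $M^\star$ be the \model{} that, on input $G$, decodes an index $i$ from a designated gadget (padded so that for every $i$ and every sufficiently large $n$ there is such a $G$ on $n$ nodes — this needs only a mild constructibility assumption on $f$), runs $U$ to simulate $P_i$ on $G$, and outputs the Boolean complement. By the previous paragraph $M^\star$ has depth $O(g(n)\log n)$, and since $g(n)=o(f(n)/\log n)$ is equivalent to $g(n)\log n = o(f(n))$, for all large $n$ this is at most $c\,f(n)$; hence $M^\star$ is realized in $\modelfamily[O(f(n)),B]$. If some $P_j$ of depth $O(g(n))$ realized the same function, then on the padded input $G_j$ encoding $P_j$ we would get $M^\star(G_j)=\neg P_j(G_j)=\neg M^\star(G_j)$, a contradiction. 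Therefore $\modelfamily[O(f(n)),B] \succ \modelfamily[O(g(n)),B]$.

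The main obstacle — and the reason this remains a conjecture — is the universal-simulation step: one must show that a universal \model{} can carry out a single simulated layer with overhead \emph{exactly} $O(\log n)$, using only the restricted layer primitives (pointwise maps, permutations of tuple coordinates, existential/universal/counting aggregations) under fixed precision and without unbounded-width per-tuple state. A weaker simulation with, say, $\mathrm{poly}\log n$ or $n^{o(1)}$ overhead would not suffice, because the conjecture only tolerates a $\log n$ gap. Pinning the overhead down at $\Theta(\log n)$ is essentially a tight round-complexity statement about \textsc{Congest}-like models, and, relatedly, unconditional depth hierarchies for genuinely parallel computation (uniform $\mathrm{NC}$-style depth) are notoriously open; the $\log n$ slack in the statement is precisely the amount of simulation overhead one would \emph{hope} makes the diagonalization go through, which is why we conjecture rather than assert it.
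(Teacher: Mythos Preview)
The paper does not prove this statement --- it is stated as a conjecture, and the paper offers no proof attempt. What the paper does provide is a pointer to Korhonen's round-hierarchy theorem for the congested-clique model ($\mathrm{CLIQUE}(g(n)) \subsetneq \mathrm{CLIQUE}(f(n))$ whenever $g(n)=o(f(n))$) together with an explanation of why that result does not transfer: congested-clique nodes carry unbounded local state and exchange $\Theta(\log n)$ bits per round, whereas a fixed-precision \model{} layer has $O(1)$ bits of state per tuple and $O(1)$ bits of communication. The $\log n$ slack in the conjecture is exactly the gap one expects from this bandwidth mismatch. The paper also notes that the conjecture is \emph{false} for unbounded-precision \modelp because expressiveness saturates at depth $O(n^{B-1})$, so the fixed-precision hypothesis is load-bearing.

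Your diagonalization route therefore goes well beyond anything the paper attempts, and it is a natural strategy given the paper's explicit analogy to the time-hierarchy theorem. You correctly isolate the decisive technical obstruction: pinning the per-layer simulation overhead of a universal \model{} at $O(\log n)$ and no worse. Two further issues are worth flagging beyond that one. First, your enumeration $P_1,P_2,\ldots$ presupposes a uniform model (a single finite weight description iterated for $D(n)$ rounds); the paper's weight-sharing discussion and its use of ``recurrent'' support this reading, but it is an assumption you are importing rather than one the conjecture states, and without it the target class need not be enumerable. Second, your universal machine $U$ has some fixed representation width $W_U$, yet it must simulate targets $P_i$ of arbitrary width $W_i$; packing $W_i$ bits of per-tuple state into $W_U$-bit tuples forces you to spread state across several tuples and recover it via aggregation, and you have not argued that this additional bookkeeping stays within the same $O(\log n)$ overhead budget. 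Finally, the paper's framing points at a second route you did not take: rather than diagonalize from scratch, establish a two-way simulation between $\Theta(\log n)$ fixed-precision \model{} layers and one congested-clique round and then import Korhonen's separation directly --- though the paper itself names the unbounded-local-state assumption of the congested clique as the obstruction there.
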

There is a closely related result for the {\em congested clique} model in distributed computing, where \cite{korhonen2018towards} proved that $\textrm{CLIQUE}(g(n))\subsetneq\mathrm{CLIQUE}(f(n))$ if $g(n)=o(f(n))$. This result does not have the $\log n$ gap because the congested clique model allows $\log n$ bits to transmit between nodes at each iteration, while fixed-precision \model allows only a constant number of bits.
The reason why the result on congested clique can not be applied to fixed-precision \modelp is that congested clique assumes unbounded precision representation for each individual node.

However, Conjecture~\ref{conj:dep-hier} is not true for \modelp with unbounded precision, because there is an upper bound depth $O(n^{B-1})$ for a model's expressiveness power (see appendix \ref{app:depth-upperbound} for a formal statement and the proof). That is, an unbounded-precision \model can not achieve stronger expressiveness by increasing its depth beyond $O(n^{B-1})$.

It is important to point out that, to realize a specific graph reasoning function, \modelp with different maximum arity $B$ may require different depth $D$. \citet{furer2001weisfeiler} provides a general construction for problems that higher-dimensional \modelp can solve in asymptotically smaller depth than lower-dimensional \modelp.
In the following we give a concrete example for computing {\em S-T Connectivity-$k$}, which asks whether there is a path of nodes from $S$ and $T$ in a graph, with length $\le k$.

\begin{theorem}[S-T Connectivity-$k$ with Different Max Arity]
For any function $f(k)$, if $f(k) = o(k)$, \modelfamily[$O(f(k))$, $2$] cannot realize S-T Connectivity-$k$.  That is, S-T Connectivity-$k$ requires depth at least $O(k)$ for a relational neural network with an maximum arity of $B = 2$. However, S-T Connectivity-$k$ {\em can} be realized by \modelfamily[$O(\log k)$, $3$].
\label{thm:connectivity}
\end{theorem}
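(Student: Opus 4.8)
The statement splits into an $O(\log k)$-depth construction for maximum arity $3$ and an $\Omega(k)$-depth lower bound for arity $2$; I would prove them independently, and the lower bound is the substantive half. For the construction, the plan is to have the network carry, in its arity-$2$ slot, the relation $R_{\le m}(x,y)\equiv[\mathrm{dist}_G(x,y)\le m]$ and double $m$ at each layer. With maximum arity $3$ a single \model{} layer realizes relational composition $R\mapsto R\circ R$: broadcast $R(x,z)$ and $R(z,y)$ into the arity-$3$ tensor indexed by $(x,y,z)$, take their conjunction with the layer's pointwise transformation, and reduce with $\exists$ over the third coordinate. Starting from $R_{\le 1}(x,y):=[x=y]\vee E(x,y)$ (the equality pattern of a node-pair is among the input predicates), $\lceil\log_2 k\rceil$ ``squaring'' layers produce $R_{\le 2^i}$ for all $i\le\lceil\log_2 k\rceil$, and composing those $R_{\le 2^i}$ with $i$ in the binary expansion of $k$ takes $\le\lceil\log_2 k\rceil$ more layers to produce exactly $R_{\le k}$ (``$\subseteq$'' is the triangle inequality; ``$\supseteq$'' follows by splitting a shortest path from $x$ to $y$ into the corresponding blocks, padded by stay-put steps, which $[x=y]$ supplies). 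Two arity-reducing layers then output $Y_0=\exists x\,\exists y\,(X^S(x)\wedge X^T(y)\wedge R_{\le k}(x,y))$, where $X^S,X^T$ mark $S,T$. Total depth $O(\log k)$, maximum arity $3$.

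\textbf{Lower bound ($B=2$).} Here I would exploit the locality of shallow, arity-$2$ networks. By the bound recalled in the proof of Corollary~\ref{thm:arity-hierarchy} (and Appendix~\ref{app:arity-hierarchy}), a model in \modelfamily[$D$, $2$] is no stronger than $D$ rounds of $1$-dimensional Weisfeiler-Leman (color refinement): its graph-classification output is a function of the multiset of $D$-round colors, and a vertex's $D$-round color is determined by its \emph{labeled $D$-ball}, the rooted induced subgraph on the vertices at distance $\le D$ carrying the $S$/$T$ marks. So it suffices to exhibit, for all large $k$ and all $D<k/2$, two inputs with identical multisets of labeled $D$-balls but opposite S-T-Connectivity-$k$ answers. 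Take
\[
G_{\mathrm{yes}}\;=\;C_{2k}^{S,T}\ \sqcup\ C_{2k+2}^{\emptyset},
\qquad
G_{\mathrm{no}}\;=\;C_{2k+2}^{S,T}\ \sqcup\ C_{2k}^{\emptyset},
\]
where $C_\ell^{S,T}$ is the $\ell$-cycle with $S,T$ antipodal and $C_\ell^{\emptyset}$ is the unmarked $\ell$-cycle. Then $\mathrm{dist}(S,T)=k$ in $G_{\mathrm{yes}}$ (a path of length $\le k$ exists) and $k+1$ in $G_{\mathrm{no}}$ (none does), so the required outputs differ. For $D<k/2$, on any cycle of length $\ge 2k$ every $D$-ball is a path on $2D+1$ vertices, so: (i) the $2D+1$ vertices within distance $D$ of $S$ contribute balls of type ``$P_{2D+1}$ with $S$ at offset $j$ from the center'' ($j=0,\dots,D$), with multiplicities $1,2,\dots,2$, independently of the cycle length; the same holds near $T$; and no such ball sees the other mark because $\mathrm{dist}(S,T)>2D$; (ii) every remaining vertex contributes an unmarked $P_{2D+1}$, whose count is $(2k-4D-2)+(2k+2)=4k-4D$ in $G_{\mathrm{yes}}$ and $(2k-4D)+2k=4k-4D$ in $G_{\mathrm{no}}$. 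The labeled $D$-ball multisets thus coincide, so \modelfamily[$D$, $2$] gives the same value on $G_{\mathrm{yes}}$ and $G_{\mathrm{no}}$; and since depth $O(f(k))$ with $f(k)=o(k)$ forces depth $<k/2$ for all large enough $k$, no such model family realizes S-T Connectivity-$k$.

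\textbf{Main obstacle.} The squaring construction and the color-refinement/$D$-ball locality fact are routine (the latter is essentially in the paper already). The delicate point is the lower-bound construction: since the theorem claims the bound with \emph{no} precision assumption, the two instances must have \emph{exactly} equal $D$-ball multisets rather than multisets merely indistinguishable to a bounded-precision aggregator --- a single pair $C_{2k},C_{2k+2}$ does not suffice, as their generic-ball counts differ by $2$, and it is the complementary unmarked cycle attached to each side that cancels the discrepancy. Checking that bookkeeping --- that the marked-ball types near $S$ and $T$ are length-independent and that the generic counts match after padding --- is the crux, and the hypothesis $f(k)=o(k)$ enters precisely to guarantee $D<k/2$, the regime in which these $D$-balls are simple paths and the $S$- and $T$-neighborhoods are disjoint.
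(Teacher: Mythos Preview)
Your proof is correct. The upper bound matches the paper's approach (repeated squaring of the reachability relation via arity-$3$ relational composition, i.e., Boolean matrix multiplication). For the lower bound you take a genuinely different route from the paper. The paper uses two disjoint \emph{paths} of $k$ nodes each (the same underlying unlabeled graph in both instances), with $S=u_1,T=u_k$ on the same path versus $S=u_1,T=v_k$ on different paths, and proves indistinguishability for $i\le k/2-1$ rounds by a direct induction on the color-refinement labels, tracking the symmetry that swaps the tail of the $u$-chain with the tail of the $v$-chain. Your construction instead uses \emph{cycles} and argues via the locality lemma that the $D$-round $1$-WL color is determined by the labeled $D$-ball: cycles make every $D$-ball a path of the same length $2D{+}1$, so indistinguishability reduces to the clean type-count you carry out, at the price of needing the complementary unmarked cycle to equalize the number of ``far'' vertices. (As a side remark, a single cycle $C_{2k+2}$ with $S,T$ placed at distance $k$ versus $k{+}1$ also works and removes the need for padding, since the far-vertex count $2k{+}2-2(2D{+}1)$ is then automatically the same in both instances.) The paper's induction is more hands-on and tied to the specific instance; your $D$-ball argument is more structural and reusable, and makes explicit why no precision assumption is needed.
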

{\em Proof sketch.} For any integer $k$, we can construct a graph with two chains of length $k$, so that if we mark two of the four ends as $S$ or $T$, any \modelfamily[$k-1$, $2$] cannot tell whether $S$ and $T$ are on the same chain. The full proof is described in Appendix~\ref{app:problems:connectivity}.

There are many important graph reasoning tasks that do not have known depth lower bounds, including all-pair connectivity and shortest distance~ \citep{karchmer1990monotone,pai2019connectivity}. In \sectapp{app:graph-problems}, we discuss the concrete complexity bounds for a series of graph reasoning problems.

\section{Learning and Generalization in Relational Neural Networks}
\label{sec:learn-gen-relnn}

Given our understanding of what functions can be realized by \modelp, we move on to the problems of learning them: Can we effectively learn a \modelp to solve a desired task given a sufficient number of input-output examples? In this paper, we show that applying {\it enumerative training} with examples up to some fixed graph size can ensure that the trained neural network will generalize to all graphs {\em larger} than those appearing in the training set.

A critical determinant of the generalization ability for \modelp is the aggregation function.
Specifically, \citet{xu2019powerful} have shown that using {\em sum} as the aggregation function provides maximum expressiveness for graph neural networks. However, sum aggregation cannot be implemented in fixed-precision models, because as the graph size $n$ increases, the range of the sum aggregation also increases.

\begin{definition}[Fixed-precision aggregation function]
An aggregation function is {\em fixed precision} if it maps from any finite {\em set} of inputs with values drawn from {\em finite domains} to a {\em fixed finite} set of possible output values;  that is, the cardinality of the range of the function cannot grow with the number of elements in the input set.
Two useful fixed-precision aggregation functions are {\em max}, which computes the dimension-wise maximum over the set of input values, and {\em fixed-precision mean}, which approximates the dimension-wise mean to a fixed decimal place.
\end{definition}

In order to focus on structural generalization in this section, we consider an {\em enumerative} training paradigm.
When the input hypergraph representation domain $\fS$ is a finite set, we can enumerate the set $\fG_{\le N}$ of all possible input hypergraph representations of size bounded by $N$.  We first enumerate all graph sizes $n \leq N$; for each $n$, we enumerate all possible values assigned to the hyperedges in the input. Given training size $N$, we enumerate all inputs in $\fG_{\le N}$, associate with each one the corresponding ground-truth output representation, and train the model with these input-output pairs.

This has much stronger data requirements than the standard sampling-based training mechanisms in machine learning. In practice, this can be approximated well when the input domain $\fS$ is small and the input data distribution is approximately uniformly distributed. The enumerative learning setting is studied by the {\it language identification in the limit} community~\citep{gold1967language}, in which it is called {\it complete presentation}. This is an interesting learning setting because even if the domain for each individual hyperedge representation is finite, as the graph size can go arbitrarily large, the number of possible inputs is enumerable but unbounded.

\begin{theorem}[Fixed-precision generalization under complete presentation]
For any hypergraph reasoning function $f$, if it can be realized by a fixed-precision relational neural network model $\fM$, then there exists an integer $N$, such that if we train the model with complete presentation on all input hypergraph representations with size smaller than $N$, $\fG_{\le N}$, then for all $M \in \fM$,
\begin{center}
\[ \sum_{G \in \fG_{\le N}} 1[M(G) \neq f(G)] = 0 \implies \forall G \in \fG_{\infty}: M(G) = f(G). \]
\end{center}
That is, as long as $M$ fits all training examples, it will generalize to all possible hypergraphs in $\fG_{\infty}$.
\label{thm:nlm-genexp}
\end{theorem}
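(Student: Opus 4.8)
The plan is to isolate a single combinatorial \emph{pumping lemma} about fixed-precision networks and derive the theorem from it by contraposition. By Theorem~\ref{thm:equivalence} we may assume the model is an NLM; fix the architecture of $\fM$ --- depth $D$ (taken constant here), max arity $B$, width, and bit-precision $W$ --- and recall that $f$ is realized by some $M^\star\in\fM$, hence shares this architecture. \textbf{Pumping lemma:} there is an integer $N$, depending only on $D,B,W$ and the width, so that for every hypergraph representation $G$ and every tuple $u$ there is an \emph{induced} sub-representation $G'$ on at most $N$ vertices, containing $u$, with $M(G)[u]=M(G')[u]$ for \emph{every} fixed-precision NLM $M$ of this architecture. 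Granting the lemma the theorem follows: let $N$ be as above, take $M\in\fM$ with $\sum_{G\in\fG_{\le N}}1[M(G)\ne f(G)]=0$, and suppose toward a contradiction that $M(G^\dagger)[u]\ne f(G^\dagger)[u]$ for some $G^\dagger\in\fG_\infty$ and some tuple $u$. Applying the lemma (to $M$, and also to $M^\star=f$) yields an induced $G'\in\fG_{\le N}$ containing $u$ with $M(G')[u]=M(G^\dagger)[u]$ and $f(G')[u]=f(G^\dagger)[u]$, so $M(G')\ne f(G')$ --- contradicting zero training error. Hence $M(G)=f(G)$ for all $G\in\fG_\infty$.

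For the lemma, two facts are combined. (i) Fixed precision forces the feature of an $a$-tuple at layer $\ell$ to live in a finite set $\Sigma_\ell^{(a)}$ whose cardinality depends only on $W$ and the width, and each layer update (coordinate permutations, expansions and reductions among a bounded number of related tuples, a fixed MLP, and a fixed-precision aggregation) is a fixed map between these finite sets; so the depth-$D$ computation is a fixed composition, with per-tuple output in a finite $\Sigma_{\mathrm{out}}$, \emph{uniformly} in the weights and in $\lvert V\rvert$. (ii) Consequently the ``schematic'' refinement this induces --- color a tuple by its input feature, then iteratively by the (rounded) histogram of its neighbors' current colors --- has, after $D$ rounds, only $C=C(D,B,W,\text{width})$ color classes, because each round's alphabet is the previous round's (finite) class set. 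Now collapse $G$: keep, for each $D$-round class $t$, a subset $S_t$ of at most $c=c(W)$ of its vertices, and pass to the induced sub-representation on $\bigcup_t S_t$ together with $u$'s own vertices. For $\max$ aggregation it suffices to keep, for every tuple that matters at some layer, one witness of each value occurring in each of its coordinate-aggregations --- a bounded bookkeeping that caps $\lvert V\rvert$. For fixed-precision \emph{mean} we instead pick the $S_t$ as independent random subsets of bounded size: the rounded mean over a coordinate is determined by the rounded histogram of neighbor-colors, each $S_t$-vertex inherits the correct rounded neighbor-class proportions in expectation, and a union bound over the $C$ classes and the boundedly many retained tuples shows that for $c$ large enough a simultaneously-good choice exists; the retained tuple $u$ then reproduces its original layer-by-layer colors and hence its output.

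The step I expect to be the real work is exactly this simultaneity: the coordinate-aggregations at different tuples and at all $D$ layers are coupled, since they all range over the one retained vertex set, so one cannot collapse greedily layer by layer --- the subset must be fixed once and shown, by induction on the layer, to preserve every tuple's schematic color; getting the quantitative bound $c$ to come out as a constant (and not, say, $\Theta(\log\lvert V\rvert)$) is where the constant-depth hypothesis and the boundedness of $C$ are used. Two caveats worth stating in the proof. First, everything above is for constant $D$; if the depth is allowed to grow with $\lvert V\rvert$ the bound $C$ on the number of schematic classes need not stay constant, and the theorem should then be read for a concrete (constant-depth) architecture, or supplemented by the observation that the state sequence $s_0,s_1,\dots$ is eventually periodic in a finite space (cf.\ the $O(n^{B-1})$ depth ceiling noted earlier). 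Second, the collapse uses only that each fixed-precision aggregate depends on its input multiset through a summary of bounded complexity --- a property of the two aggregations in the Definition (max and fixed-precision mean) that one should take as the working meaning of ``fixed-precision aggregation'' here.
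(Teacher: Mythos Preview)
Your route is genuinely different from the paper's, and considerably harder. The paper's argument is a one-line counting observation: with fixed precision, every layer computes a map between \emph{finite} value sets, so the family $\fM$ contains only finitely many distinct functions (at most $(2^{W_b})^{2^{W_b}}$ per layer, independent of $n$). For each $M\in\fM$ not equal to $f$ there is a single witness graph $G_M$ on which they differ; take $N=\max_M |V(G_M)|$. Any $M$ that fits $\fG_{\le N}$ fits every witness, hence equals $f$ everywhere. No pumping, no subsampling, no induction on layers.

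Your pumping lemma would be a strictly stronger structural statement, and the proof sketch has a real gap in the fixed-precision-mean case. The random-subset argument breaks on boundary instances: if a tuple's true neighbor-proportion sits within $O(1/n)$ of a rounding threshold, a size-$c$ sample lands on the correct side with probability only $\tfrac12+o(1)$, so the union bound over even boundedly many retained tuples cannot be pushed below $1$ by increasing $c$. More basically, you do not need the ``induced sub-representation of $G$'' clause at all: your contraposition only uses that there is \emph{some} $G'\in\fG_{\le N}$ (and some tuple $u'$) with $M(G')[u']=M(G^\dagger)[u]$ simultaneously for $M$ and $M^\star$. That weaker statement follows immediately from the paper's finiteness observation---the ``type'' $(M(G)[u])_{M\in\fM}$ ranges over a finite set, so each type is already realized by some graph of bounded size. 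If you want to keep your framing, replace the pumping lemma by this finite-type remark and the argument goes through cleanly; the layer-by-layer subsampling can be dropped entirely.
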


{\em Proof.} The key observation is that for any fixed vector representation length $W$, there are only a finite number of distinctive models in a fixed-precision \model family, {\em independent of the graph size $n$}.
Let $W_b$ be the number of bits in each intermediate representation of a fixed-precision \model. There are at most $(2^{W_b})^{2^{W_b}}$ different mappings from inputs to outputs. Hence, if $N$ is sufficiently large to enumerate all input hypergraphs, we can always identify the correct model in the hypothesis space.
Our results are related to the {\it algorithmic alignment} approach~\citep{xu2020can,xu2021neural}. In contrast to their Probably Approximately Correct (PAC) Learning bounds for sample efficiency, our expressiveness results directly quantifies whether a hypergraph neural network can be trained to realize a specific function.

\section{Related Work}
\label{sec:related}
Solving problems on graphs of arbitrary size is studied in many fields. \modelp can be viewed as circuit families with constrained architecture. In distributed computation, the congested clique model can be viewed as 2-arity \modelp, where nodes have identities as extra information. Common graph problems including sub-structure detection\citep{li2017ac, rossman2010average} and connectivity\citep{karchmer1990monotone} are studied for lower bounds in terms of depth, width and communication. This has been connected to GNNs for deriving expressiveness bounds \citep{loukas2020What}.
Studies have been conducted on the expressiveness of GNNs and their variants.
\citet{xu2019powerful} provide an illuminating characterization of GNN expressiveness in terms of the WL graph isomorphism test.
\citet{azizian2020expressive} analyze the expressiveness of higher-order Folklore GNNs by connecting them with high-dimensional WL-tests. We have the similar results in the arity hierarchy.
\citet{barcelo2020logical} reviewed GNNs from the logical perspective and rigorously refined their logical expressiveness with respect to fragments of first-order logic.
\citet{dong2018neural} proposed Neural Logical Machines (NLMs) to reason about higher-order relations, and showed that increasing order inreases expressiveness. It is also possible to gain expressiveness using unbounded computation time, as shown by  the work of \citet{dehghani2019universal} on dynamic halting in transformers.
It is interesting that GNNs may generalize to larger graphs. \citet{xu2020can, xu2021neural} have studied the notion of {\it algorithmic alignment} to quantify such structural generalization. \citet{dong2018neural} provided empirical results showing that NLMs generalize to much larger graphs on certain tasks. \citet{buffelli2022sizeshiftreg} introduced a regularization technique to improve GNNs' generalization to larger graphs and demonstrated its effectiveness empirically. In \citet{xu2020can}, they analyzed and compared the sample complexity of Graph Neural Networks. This is different from our notion of expressiveness for realizing functions. In \citet{xu2021neural}, they showed emperically on some problems (e.g., Max-Degree, Shortest Path, and n-body problem) that algorithm alignment helps GNNs to extrapolate, and theoretically proved the improvement by algorithm alignment on the Max-Degree problem. In this extended abstract, instead of focusing on computing specific graph problems, we analyzed how GNNs can extrapolate to larger graphs in a general case, based on the assumption of fixed precision computation.
\section{Conclusion}
In this extended abstract, we have shown the substantial increase of expressive power due to higher-arity relations and increasing depth, and have characterized very powerful structural generalization from training on small graphs to performance on larger ones. All theoretical results are further supported by the empirical results, discussed in \sectapp{app:exp-detail}. Although many questions remain open about the overall generalization capacity of these models in continuous and noisy domains, we believe this work has shed some light on their utility and potential for application in a variety of problems.

\vspace{1em}
\xhdr{Acknowledgement.}
We thank anonymous reviewers for their comments.
This work is in part supported by ONR MURI N00014-16-1-2007, the Center for Brain, Minds, and Machines (CBMM, funded by NSF STC award CCF-1231216), NSF grant 2214177, AFOSR grant FA9550-22-1-0249, ONR grant N00014-18-1-2847, the MIT Quest for Intelligence, MIT–IBM Watson Lab. Any opinions, findings, and conclusions or recommendations expressed in this material are those of the authors and do not necessarily reflect the views of our sponsors.

\bibliographystyle{unsrtnat}
\bibliography{nlmtheory}

\appendix
\clearpage

\begin{center}
{\Large \bf Appendix}
\end{center}

The appendix is organized as the following.
In Appendix~\ref{ssec:hg-nlm}, we provide a formalization of two types of hypergraph neural networks discussed in the main paper, and proved their equivalence. In Appendix~\ref{app:exp-reduce}, we prove the theorems for the arity hierarchy and provide concrete examples for expressiveness analyses. Finally, in Appendix~\ref{app:exp-detail}, we include additional experiment results to empirically illustrate the application of theorems discussed in the paper.

\section{Hypergraph Neural Networks}\label{ssec:hg-nlm}

We now introduce two important hypergraph neural network implementations that can be trained to solve graph reasoning problems: Higher-order Graph Neural Networks~\citep[HO-GNN;][]{morris2019weisfeiler} and Neural Logic Machines~\citep[NLM;][]{dong2018neural}. The are equivalent to each other in terms of expressiveness. Showing this equivalence allows us to focus the rest of the paper on analyzing a single model type, with the understanding that the conclusions generalize to a broader class of hypergraph neural networks.

\subsection{Higher-order Graph Neural Networks}
Higher-order Graph Neural Networks~\citep[HO-GNNs;][]{morris2019weisfeiler} are Graph Neural Networks (GNNs) that apply to hypergraphs. A GNN is usually defined based on two message passing operations.

\begin{itemize}
    \item Edge update: the feature of each edge is updated by features of its ends.
    \item Note update: the feature of each node is updated by features of all edges adjacent to it.
\end{itemize}

However, computing only node-wise and edge-wise features does not handle higher-order relations, such as triangles in the graph. In order to obtain more expressive power, GNNs have be extend to hypergraphs of higher arity~\citep{morris2019weisfeiler}. Specifically, HO-GNNs on $B$-ary hypergraph maintains features for all $B$-tuple of nodes, and the neighborhood is extended to $B$-tuples accordingly: the feature of tuple $(v_1,v_2,\cdots,v_B)$ is updated by the $|V|$ element multiset (contain $|V|$ elements for each $u \in V$) of $B$-tuples of features
\begin{eqnarray}
    \left(H_i[u,v_2,\cdots,v_B], H_{i-1}[v_1,u,v_2,\cdots,v_B],\cdots H_{i-1}[v_1,\cdots,v_{B-1},u]\right)
\end{eqnarray}
where $H_{i-1}[\vv]$ is the feature of tuple $\vv$ from the previous iteration.

We now introduce the formal definition of the high-dimensional message passing. We denote $\vv$ as a $B$-tuple of nodes $(v_1,v_2,\cdots,v_B)$, and generalize the neighborhood to a higher dimension by defining the neighborhood of $\vv$ as all node tuples that differ from $\vv$ at one position.

\begin{eqnarray}
    \mathrm{Neighbors}(\vv, u) &=& \left((u,v_2,\cdots,v_B), (v_1,u,v_3,\cdots,v_B),\cdots, (v_1,\cdots,v_{B-1},u)\right)\\
    N(\vv) &=& \left\{\mathrm{Neighbors}(\vv, u) |u \in V\right\}
\end{eqnarray}

Then message passing scheme naturally generalizes to high-dimensional features using the high-dimensional neighborhood.

\begin{eqnarray}
     \received_i[\vv]&=&\sum_u \left(\nn_1\left(H_{i-1} [\vv]; \concat_{\vv' \in \neighbors (\vv,u)}  H_{i-1} [\vv']\right)\right)\label{eq:hdgnn-mp}
\end{eqnarray}

\subsection{Neural Logic Machines}
A \model is a multi-layer neural network that operates on hypergraph representations, in which the hypergraph representation functions are represented as tensors. The input is a hypergraph representation $(V, X)$. There are then several computational layers, each of which produces a hypergraph representation with nodes $V$ and a new set of representation functions.
Specifically, a $B$-ary \model produces hypergraph representation functions with arities from 0 up to a maximum hyperedge arity of $B$.  We let $T_{i, j}$ denote the tensor representation for the output at layer $i$ and arity $j$. Each entry in the tensor is a mapping from a set of node indices $(v_1, v_2, \cdots, v_j)$ to a vector in a latent space $\R^W$. Thus, $T_{i, j}$ is a tensor of $j+1$ dimensions, with the first $j$ dimensions corresponding to $j$-tuple of nodes, and the last feature dimension. For convenience, we write $h_{0, \cdot}$ for the input hypergraph representation and $h_{D, \cdot}$ for the output of the \model.

\begin{figure}
\begin{subfigure}{0.49\textwidth}
    \centering
    \includegraphics[width=1\textwidth]{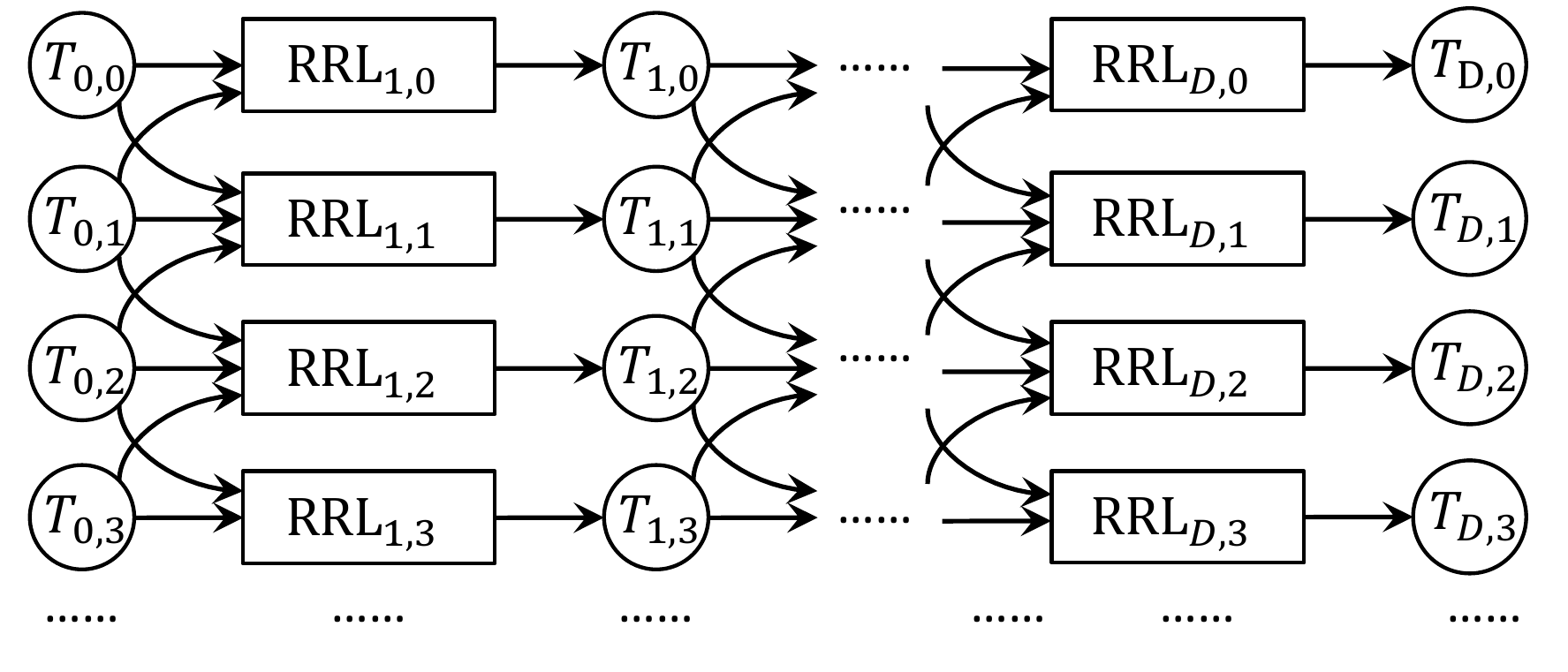}
    \caption{The overall computation graph of a \model.}
    \label{fig:nlm}
\end{subfigure}
\hfill
\begin{subfigure}{0.49\textwidth}
    \centering
    \includegraphics[width=\textwidth]{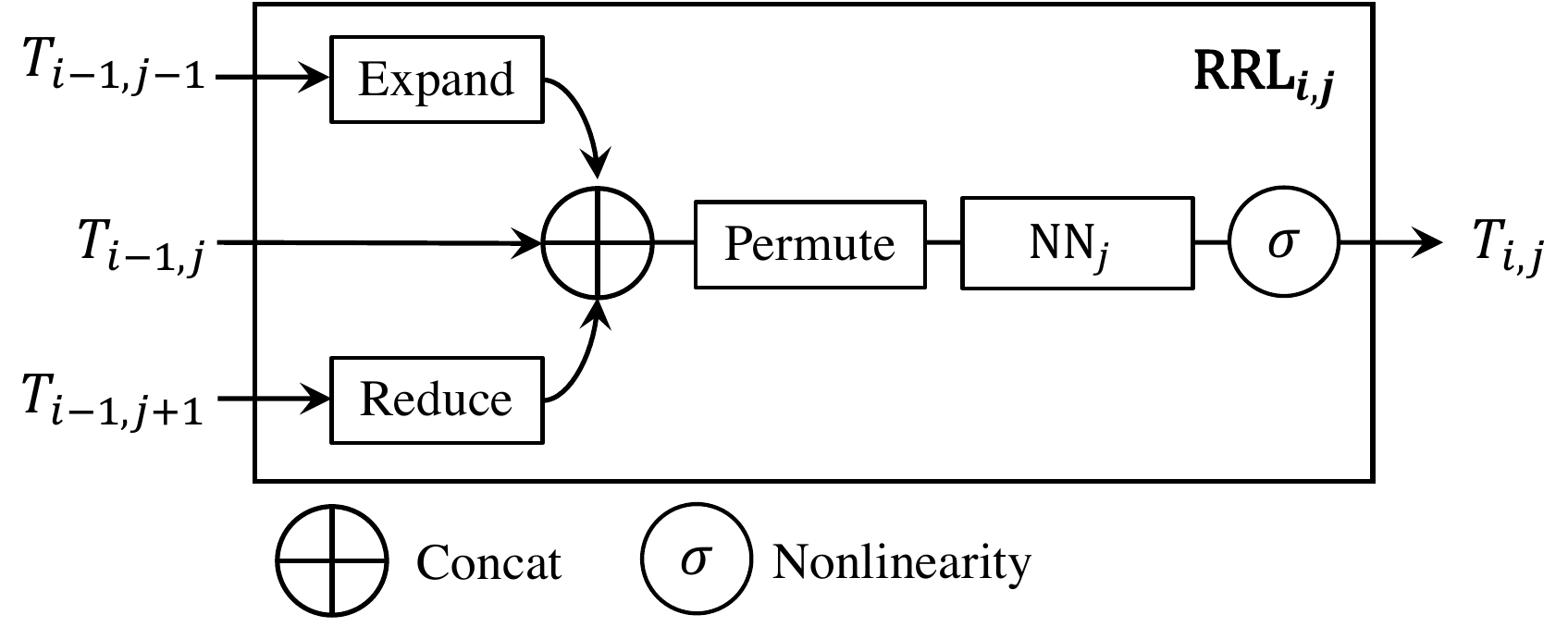}
    \caption{The computation graph of a single \model block.}
    \label{fig:nlmblock}
\end{subfigure}
\caption{The overall architecture of our \modelfull (\modelp). It follows the computation graph of NLM~\citep{dong2018neural} and can be applied to hypergraphs.}
\label{fig:nlmfigure}
\end{figure}

\fig{fig:nlm} shows the overall architecture of \modelp. It has $D \times B$ computation blocks, namely relational reasoning layers (RRLs). Each block $\text{RRL}_{i,j}$, illustrated in \fig{fig:nlmblock}, takes the output from neighboring arities in the previous layer, $T_{i-1, j-1}$, $T_{i-1, j}$ and $T_{i-1, j + 1}$, and produces $T_{i,j}$. Below we show the computation of each primitive operation in an RRL.

The {\it expand} operation takes tensor $T_{i-1,j-1}$ (arity $j-1$) and produces a new tensor $T^{E}_{i-1,j-1}$ of arity $j$. The {\it reduce} operation takes tensor $T_{i-1,j+1}$ (arity $j+1$) and produces a new tensor $T^{R}_{i-1,j+1}$ of arity $j+1$. Mathematically,
\begin{center}
\begin{eqnarray*}
    T^{E}_{i-1,j-1}[v_1, v_2, \cdots, v_j] & = & T_{i-1,j-1}[v_1, v_2, \cdots, v_{j-1}];\\
    T^{R}_{i-1,j+1}[v_1, v_2, \cdots, v_j] & = & \text{Agg}_{v_{j+1}} \left\{ T_{i-1,j+1}[v_1, v_2, \cdots, v_j, v_{j+1}] \right\}.
\end{eqnarray*}
\end{center}
Here, $\text{Agg}$ is called the aggregation function of a \model. For example, a sum aggregation function takes the summation along the dimension $j+1$ of the tensor, and a max aggregation function takes the max along that dimension.

The {\it concat} (concatenate) operation $\bigoplus$ is applied at the ``vector representation'' dimension. The {\it permute} operation generates a new tensor of the same arity, but it fuses the representations of hyperedges that share the same set of entities but in different order, such as $(v_1, v_2)$ and $(v_2, v_1)$. Mathematically, for tensor $X$ of arity $j$, if $Y = \text{permute}(X)$ then
\begin{center}
\[ Y[v_1, v_2, \cdots, v_j] = \mathop{\text{Concat}}_{\sigma \in S_j} \left\{ X[v_{\sigma_1}, v_{\sigma_2}, \cdots, v_{\sigma_j}] \right\}, \]
\end{center}
where $\sigma \in S_j$ iterates over all permuations of $\{1, 2, \cdots j\}$. $\text{NN}_j$ is a multi-layer perceptron (MLP) applied to each entry in the tensor produced after permutation, with nonlinearity $\sigma$ (\eg, ReLU).

It is important to note that we intentionally name the MLPs  $\text{NN}_j$ instead of $\text{NN}_{i, j}$. In generalized relational neural networks, for a given arity $j$, all MLPs across all layers $i$ are shared. It is straightforward to see that this ``weight-shared'' model can realize a ``non-weight-shared'' \model that uses different weights for MLPs at different layers when the number of layers is a constant.
With a sufficiently large length of the representation vector, we can simulate the computation of applying different transformations by constructing block matrix weights. (A more formal proof is in \sectapp{ssec:hg-nlm})
The advantage of this weight sharing is that the network can be easily extended to a ``recurrent'' model. For example, we can apply the \model for a number of layers that is a function of $n$, where $n$ is the the number of nodes in the input graph. Thus, we will use the term {\it layers} and {\it iterations} interchangeably.

Handling high-arity features and using deeper models usually increase the computational cost. In appendix \ref{app:complexity}, we show that the time and space complexity of \model$[D,B]$ is $O(Dn^B)$.

Note that even when hyperparameters such as the maximum arity and the number of iterations are fixed, a \model is still a {\it model family} $\fM$: the weights for MLPs will be trained on some data. Furthermore, each model $M \in \fM$ is a \model with a specific set of MLP weights.

\subsection{Expressiveness Equivalence of Relational Neural Networks}
\label{app:equivalence}

Since we are going to study both constant-depth and adaptive-depth graph neural networks, we first prove the following lemma (for general multi-layer neural networks), which helps us simplify the analysis.

\begin{lemma}
A neural network with representation width $W$ that has $D$ different layers $\nn_1, \cdots, \nn_D$ can be realized by a neural network that applies a single layer $\nn'$ for $D$ iterations with width $(D+1)(W+1)$.
\label{thm:layers-vs-its}
\end{lemma}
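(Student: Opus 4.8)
The plan is to simulate a $D$-layer network $\nn_1,\dots,\nn_D$ by a single recurrent layer $\nn'$ that carries a ``program counter'' in its hidden representation, so that at iteration $t$ the unit behaves like $\nn_t$ and at the end the answer of layer $D$ is exposed. Concretely, I would let the width-$(D+1)(W+1)$ representation be partitioned into $D+1$ blocks of width $W+1$; in each block, $W$ coordinates hold a candidate width-$W$ activation and the extra coordinate is a ``validity flag.'' The invariant maintained after iteration $t$ is: block $t$ holds (flag $=1$, activation $=$ the true output of layer $\nn_t$), and all other blocks hold (flag $=0$, activation $=0$). The input is encoded as block $0$ active (this is why we need $D+1$ blocks, not $D$), and after $D$ iterations block $D$ carries the desired output, which we read off with a fixed linear projection.

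The key steps, in order: (i) Define the encoding/decoding linear maps and state the invariant precisely. (ii) Construct $\nn'$: on input representation $\vx$, for each target block index $t\in\{1,\dots,D\}$ it should compute $\nn_t$ applied to the activation stored in block $t-1$, but gated by block $t-1$'s flag, and write the result (with flag $1$) into block $t$, zeroing block $t-1$. Since $\nn'$ is a fixed MLP, it computes all of $\nn_1,\dots,\nn_D$ ``in parallel'' on their respective input slots; the flags ensure only the one live computation produces a nonzero result, so no interference occurs. (iii) Verify the invariant by induction on $t$: the base case is the input encoding (block $0$ live), and the inductive step is exactly one application of $\nn'$, using that an MLP can multiply a bounded activation by a $\{0,1\}$ flag (e.g.\ via a couple of ReLU units, or by absorbing the flag into the first linear map when activations are bounded). (iv) Conclude that after $D$ iterations block $D$ holds $\nn_D(\cdots\nn_1(\text{input}))$, and decoding gives the original network's output.

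The main obstacle — really the only delicate point — is implementing the multiplicative gating (activation times flag) inside a single fixed MLP with a standard nonlinearity such as ReLU, since a plain affine-then-ReLU map cannot multiply two of its own inputs exactly. The standard fixes are either (a) to work with bounded-precision / bounded-range activations, where $x \mapsto \mathrm{ReLU}(x + C(\text{flag}-1)) - \mathrm{ReLU}(-x + C(\text{flag}-1))$ recovers $x$ when flag $=1$ and $0$ when flag $=0$ for a large enough constant $C$; or (b) to restructure the invariant so that at each step the ``dead'' blocks contribute an additive constant that is then cancelled, rather than needing a true product. Since the paper explicitly restricts attention to fixed-precision (bounded) representations, option (a) is available and clean, and I would use it; I would also remark that with one extra layer inside $\nn'$ (allowed, as $\nn'$ is itself an MLP of constant depth) the gating is exact, so the width bound $(D+1)(W+1)$ is all that needs tracking. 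Everything else — the encoding maps, the induction, the final projection — is routine bookkeeping.
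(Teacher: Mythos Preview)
Your proposal is correct and follows essentially the same construction as the paper: partition the width-$(D+1)(W+1)$ state into $D+1$ segments of size $W+1$, each consisting of a flag bit and a $W$-dimensional slot, initialize segment $0$ with the input and flag $1$, and have $\nn'$ apply $\nn_i$ from segment $i-1$ to segment $i$ gated by the flag, so that after $D$ iterations segment $D$ holds the final output. You are in fact more careful than the paper, which simply asserts the flag-gated update without discussing how to realize the multiplicative gate in an MLP; your ReLU-based gating under the bounded-precision assumption is exactly the right way to fill that gap.
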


\begin{proof}
The representation for $\nn'$ can be partitioned into $D+1$ segments each of length $W+1$. Each segment consist of a ``flag'' element and a $W$-element representation, which are all $0$ initially, except for the first segment, where the flag is set to $1$, and the representation is the input.

$\nn'$ has the weights for all $\nn_1, \cdots, \nn_D$, where weights $\nn_i$ are used to compute the representation in segment $i+1$ from the representation in segment $i$. Additionally, at each iteration, segment $i+1$ can only be computed if the flag in segment $i$ is $1$, in which case the flag of segment $i+1$ is set to $1$. Clearly, after $D$ iterations, the output of $\nn_k$ should be the representation in segment $D+1$.
\end{proof}

Due to Lemma~\ref{thm:layers-vs-its}, we consider the neural networks that recurrently apply the same layer because a) they are as expressive as those using layers of different weights, b) it is easier to analyze a single neural network layer than $D$ layers, and c) they naturally generalize to neural networks that runs for adaptive number of iterations (\eg GNNs that run $O(\log n)$ iterations where $n$ is the size of the input graph).

We first describe a framework for quantifying if two hypergraph neural network models are equally expressive on regression tasks (which is more general than classification problems). The framework view the expressiveness from the perspective of computation. Specifically, we will prove the expressiveness equivalence between models by showing that their computation can be aligned.

In complexity, we usually show a problem is at least as hard as the other one by showing a reduction from the other problem to the problem. Similarly, on the expressiveness of \modelp, we can construct reduction from model family $\fA$ to model family $\fB$ to show that $\fB$ can realize all computation that $\fA$ does, or even more. Formally, we have the following definition.

\begin{definition}[Expressiveness reduction]
For two model families $\fA$ and $\fB$, we say $\fA$ can be \textbf{reduced} to $\fB$ if and only if there is a function $r:\fA \to \fB$ such that for each model instance $A\in \fA$, $r(A)\in\fB$ and $A$ have the same outputs on all inputs. In this case, we say $\fB$ is at least as expressive as $\fA$.
\end{definition}

\begin{definition}[Expressiveness equivalence]
For two model families $\fA$ and $\fB$, if $\fA$ and $\fB$ can be reduced to each other, then $\fA$ and $\fB$ are equally expressive.
Note that this definition of expressiveness equivalence generalizes to both classification and regression tasks.
\end{definition}

\paragraph{Equivalence between HO-GNNs and \modelp.} We will prove the equivalence between HO-GNNs and \modelp by making reductions in both directions.

\begin{lemma}
A $B$-ary HO-GNN with depth $D$ can be realized by a \model with maximum arity $B+1$ and depth $2D$.
\label{lemma:gmm-to-nlm}
\end{lemma}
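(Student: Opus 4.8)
The plan is to simulate each of the $D$ rounds of $B$-ary HO-GNN message passing by two consecutive \model layers, using the arity-$(B{+}1)$ tensor as scratch space for the neighbourhood ``gather'' and keeping the arity-$B$ tensor in exact correspondence with the HO-GNN features $H_i$. I would initialize the \model by placing the HO-GNN input features on $B$-tuples into $h_{0,B}$ and setting all other arities (including the scratch arity $B{+}1$) to $0$; after $2D$ layers I would read off $H_D$ from $h_{2D,B}$. The arities $0,\cdots,B-1$ stay zero throughout. It then remains to describe what one round — \model layers $2i-1$ and $2i$ — computes.

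\textbf{First layer of round $i$ (gather at arity $B{+}1$).} I would have the block $\text{RRL}_{2i-1,B+1}$ \emph{expand} $T_{2i-2,B}=H_{i-1}$ into $T^{E}[v_1,\cdots,v_B,u]=H_{i-1}[v_1,\cdots,v_B]$ (independent of $u$), and then apply \emph{permute}, which at entry $(v_1,\cdots,v_B,u)$ produces the concatenation over $\sigma\in S_{B+1}$ of $T^{E}[v_{\sigma(1)},\cdots,v_{\sigma(B+1)}]=H_{i-1}[v_{\sigma(1)},\cdots,v_{\sigma(B)}]$. The key observation is that among these slots the identity permutation contributes $H_{i-1}[v_1,\cdots,v_B]=H_{i-1}[\vv]$, and the transposition swapping coordinates $j$ and $B{+}1$ contributes $H_{i-1}[v_1,\cdots,v_{j-1},u,v_{j+1},\cdots,v_B]$ for each $j=1,\cdots,B$ — precisely the elements of $\neighbors(\vv,u)$. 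Hence $\text{NN}_{B+1}$ can be set to select these $B{+}1$ slots, arrange them in the canonical order, and apply the HO-GNN message network $\nn_1$, so that $T_{2i-1,B+1}[v_1,\cdots,v_B,u]=\nn_1\bigl(H_{i-1}[\vv];\ \bigoplus_{\vv'\in\neighbors(\vv,u)}H_{i-1}[\vv']\bigr)$. At the same layer I would have $\text{RRL}_{2i-1,B}$ simply copy $H_{i-1}$ forward (its expand- and reduce-inputs, arities $B-1$ and $B+1$ of the previous layer, are both zero, and $\text{NN}_B$ extracts the identity slot of its permute output).

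\textbf{Second layer of round $i$ (aggregate and update at arity $B$).} Here I would have $\text{RRL}_{2i,B}$ \emph{reduce} $T_{2i-1,B+1}$ over its last coordinate with $\text{Agg}=\text{sum}$, giving $T^{R}[\vv]=\sum_u\nn_1(\cdots)=\received_i[\vv]$; since it also receives $T_{2i-1,B}=H_{i-1}$, the map $\text{NN}_B$ can then apply whatever per-tuple update the HO-GNN uses (be it $H_i=\received_i$ or $H_i=\nn_2(H_{i-1},\received_i)$), yielding $T_{2i,B}=H_i$; I would zero the arity-$(B{+}1)$ block so the next round starts clean. Iterating over $i=1,\cdots,D$ gives $T_{2D,B}=H_D$, so the resulting \modelfamily[$2D$, $B{+}1$] realizes the HO-GNN, using the same aggregation (sum) and only a constant blow-up (a factor $(B{+}1)!$ from \emph{permute}) in representation width.

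\textbf{Main obstacle.} Two points will need care. The first is the combinatorial claim in the previous paragraphs: that a \emph{single} expand followed by a \emph{single} permute at arity $B{+}1$ simultaneously exposes $\{H_{i-1}[\vv]\}\cup\{H_{i-1}[\vv']:\vv'\in\neighbors(\vv,u)\}$ at entry $(\vv,u)$, which hinges on the identification of ``replace the $j$-th coordinate of $\vv$ by $u$'' with ``apply to $(v_1,\cdots,v_B,u)$ the transposition of coordinates $j$ and $B{+}1$''. The second is weight sharing: the maps $\text{NN}_j$ are shared across all \model layers (so $\text{NN}_B$ must behave differently on odd versus even layers), and the $D$ HO-GNN rounds may use different weights. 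I would handle this exactly as in the weight-sharing discussion of Appendix~\ref{ssec:hg-nlm} / Lemma~\ref{thm:layers-vs-its}: first build the depth-$2D$ construction as a \emph{non}-weight-shared \model that encodes each round's transformation in its own layers, then realize it by a weight-shared one at the cost of a larger — but still constant, since $D$ is constant — representation width, which changes neither the maximum arity $B{+}1$ nor the depth $2D$.
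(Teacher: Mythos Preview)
Your proposal is correct and follows essentially the same approach as the paper's proof: expand $H_{i-1}$ from arity $B$ to $B{+}1$, use \emph{permute} to place $H_{i-1}[\vv]$ and all $H_{i-1}[\vv']$ for $\vv'\in\neighbors(\vv,u)$ at entry $(\vv,u)$, apply $\nn_1$, and then \emph{reduce} over $u$ in the second layer to obtain $\received_i$. Your treatment is in fact more careful than the paper's on bookkeeping points the paper leaves implicit --- the copy-forward of $H_{i-1}$ at arity $B$, zeroing the scratch tensor, and the weight-sharing issue handled via Lemma~\ref{thm:layers-vs-its} --- but the core two-layer simulation is identical.
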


\begin{proof}
We prove lemma \ref{lemma:gmm-to-nlm} by showing that one layer of GNNs on $B$-ary hypergraphs can be realized by two \model with maximum arity $B+1$.

Firstly, a GNN layer maintain features of $B$-tuples, which are stored in correspondingly in an \model layer at dimension $B$.
Then we will realize the message passing scheme using the \model features of dimension $B$ and $B+1$ in two steps.

Recall the message passing scheme generalized to high dimensions (to distinguish, we use $H$ for HO-GNN features and $T$ for \model features.)

\begin{align}
    \received_{i}(\vv)=\sum_u \left(\nn_1\left(H_{i-1,B} [\vv]; \concat_{\vv' \in \neighbors (\vv,u)}  H_{i-1} [\vv']\right)\right) \label{eq:hdgnn-mp-g}
\end{align}

At the first step, the Expand operation first raise the dimension to $B+1$ by expanding a non-related variable $u$ to the end, and the Permute operation can then swap $u$ with each of the elements (or no swap). Particularly, $T_{i,B}[v_1,v_2,\cdots,v_B]$ will be expand to
\begin{align*}
    &T_{i+1,B+1}[u,v_2,v_3,\cdots,v_B,v_1], T_{i+1,B+1}[v_1,u,v_3,\cdots,v_B,v_2],\cdots,\\
    &T_{i+1,B+1}[v_1,v_2,\cdots,v_{B-1},u,v_B] \text{,and } T_{i+1,B+1}[v_1,v_2,\cdots,v_{B-1},v_B,u]
\end{align*}

Hence, $T_{i+1,B+1}[v_1,v_2,v_3,\cdots,v_B,u]$ receives the features from
\begin{align*}
    T_{i,B}[v_1,v_2,\cdots,v_B], T_{i,B}[u,v_2,v_3,\cdots,v_B], T_{i,B}[v_1,u,v_3,\cdots,v_B],\cdots,
    T_{i,B}[v_1,v_2,\cdots,v_{B-1},u]
\end{align*}

These features matches the input of $\nn_1$ in equation \ref{eq:hdgnn-mp-g}, and in this layer $\nn_1$ can be applied to compute things inside the summation.

Then at the second step, the last element is reduced to get what tuple $\vv$ should receive, so $\vv$ can be updated. Since each HO-GNN layer can be realized by such two \model layers, each $B$-ary HO-GNN with depth $D$ can be realized by a \model of maximum arity $(B+1)$ and depth $2D$.
\end{proof}

To complete the proof we need to find a reduction from \modelp of maximum arity $B+1$ to $B$-ary HO-GNNs. The key observation here is that the features of $(B+1)$-tuples in \modelp can only be expanded from sub-tuples, and the expansion and reduction involving $(B+1)$-tuples can be simulated by the message passing process.

\begin{lemma}
The features of $(B+1)$-tuples feature $T_{i,B+1}[v_1,v_2,\cdots, v_{B+1}]$ can be computed from the following tuples
\begin{align*}
    \left(T_{i,B}[v_2,v_3,\cdots,v_{B+1}],T_{i,B}[v_1,v_3,\cdots,v_{B+1}],\cdots,T_{i,B}[v_1,v_2,\cdots,v_{B}]\right).
\end{align*}
\label{claim:d1-from-d}
\end{lemma}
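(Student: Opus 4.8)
The plan is to prove the lemma by induction on the layer index $i$, exploiting the single structural fact that $B+1$ is the maximum arity of the \model, so that the block $\mathrm{RRL}_{i,B+1}$ never receives a ``reduce'' input from an arity-$(B+2)$ tensor: it is fed only the \emph{expansion} of $T_{i-1,B}$ and the \emph{permutation} of $T_{i-1,B+1}$, then a concatenation and the shared MLP $\mathrm{NN}_{B+1}$. I would first assume (as is implicit in the reduction to $B$-ary HO-GNNs, and as holds for any \model produced by Lemma~\ref{lemma:gmm-to-nlm}) that the \model's input hyperedges have arity at most $B$, so $T_{0,B+1}$ is a constant tensor. This gives the base case: $T_{1,B+1}[v_1,\cdots,v_{B+1}]$ is $\mathrm{NN}_{B+1}$ applied to $\mathrm{permute}\big(\mathrm{expand}(T_{0,B})\big)[v_1,\cdots,v_{B+1}]$ together with a constant, and by the definitions of expand and permute the former is exactly the concatenation, over all ordered $B$-element sub-tuples of $(v_1,\cdots,v_{B+1})$, of $T_{0,B}$ evaluated on that sub-tuple; in particular it depends only on the $B{+}1$ leave-one-out sub-tuples (in all orderings), which is the asserted form.

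For the inductive step I would carry the slightly strengthened hypothesis that $T_{i,B+1}[v_1,\cdots,v_{B+1}]$ equals $g_i$ of the tuple $\{T_{i-1,B}[w] : w \text{ a leave-one-out sub-tuple of } (v_1,\cdots,v_{B+1})\}$, with $g_i$ a fixed function not depending on the particular tuple (this tuple-independence is exactly the weight-sharing of $\mathrm{NN}_{B+1}$ across tensor entries). Then for $T_{i+1,B+1}$: the expand-permute branch contributes a function of $\{T_{i,B}[w]\}$ over sub-tuples $w$ of $(v_1,\cdots,v_{B+1})$; the permute-of-$T_{i,B+1}$ branch contributes $\mathrm{Concat}_{\sigma\in S_{B+1}} T_{i,B+1}[\sigma(v_1,\cdots,v_{B+1})]$, and by the inductive hypothesis each term there is a function of $\{T_{i-1,B}[w]\}$ over the same sub-tuples $w$ (a reordering of the $(B{+}1)$-tuple has the same collection of sub-tuples). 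Composing with $\mathrm{NN}_{B+1}$ closes the induction. The place I would be careful is the layer bookkeeping: the $T_{i,B+1}$ branch reaches back to arity-$B$ features of layers $\le i-1$, so unrolled, $T_{i,B+1}[\vv]$ is a fixed function of the arity-$B$ sub-tuple features over a bounded window of earlier layers; the one-step form displayed is recovered because the arity-$B$ update $T_{i-1,B}\mapsto T_{i,B}$ is itself local message passing, so the simulating HO-GNN can carry whatever bounded slice of history it needs inside a widened $B$-tuple feature. I expect this bookkeeping (and stating the induction hypothesis so that it self-closes), rather than any genuinely hard step, to be the main obstacle; the combinatorial check that ``expand then permute'' from arity $B$ to $B+1$ surfaces every $B$-element sub-tuple in every order is routine.

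Finally, this lemma is precisely what makes the reverse reduction go through: the simulating $B$-ary HO-GNN never stores an arity-$(B{+}1)$ tensor, and whenever the \model would reduce $T_{i,B+1}$ along its last coordinate to update an arity-$B$ tensor, the HO-GNN reproduces each summand by invoking $g_i$ on the arity-$B$ features held by the tuple's neighbours, which by inspection are exactly the leave-one-out sub-tuples, i.e.\ via one round of high-order message passing.
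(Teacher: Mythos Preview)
Your proposal is correct and follows essentially the same approach as the paper, just considerably more carefully spelled out. The paper's proof is a two-sentence observation: since $B{+}1$ is the maximum arity there is no reduce from arity $B{+}2$, so $T_{i,B+1}$ is built only from itself at the previous layer or from expansion of $T_{i-1,B}$; because the arity-$B$ tensors can be widened to carry all earlier-layer history, $T_{i,B+1}[\vv]$ is determined by the arity-$B$ sub-tuple features at layer $i$. Your explicit induction, the check that permute preserves the ``function of leave-one-out sub-tuples'' property, and your discussion of the history-carrying bookkeeping all unfold exactly this argument; the paper simply asserts the last point with the phrase ``representations at all previous iterations $j<i$ can be contained in $T_{i,B}$.''
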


\begin{proof}
Lemma~\ref{claim:d1-from-d} is true because $(B+1)$-dimensional representations can either be computed from themselves at the previous iteration, or expanded from $B$-dimensional representations. Since representations at all previous iterations $j < i$ can be contained in $T_{i, B}$, it is sufficient to compute $T_{i,B+1}[v_1,v_2,\cdots, v_{B+1}]$ from all its $B$-ary sub-tuples.
\end{proof}

Then let's construct the HO-GNN for given \model to show the existence of the reduction.

\begin{lemma}
A \model of maximum arity $B+1$ and depth $D$ can be realized by a $B$-ary HO-GNN with no more than $D$ iterations.
\label{lemma:nlm-to-gmm}
\end{lemma}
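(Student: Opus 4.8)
\textbf{Proof proposal for Lemma~\ref{lemma:nlm-to-gmm}.}

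The plan is to build the reduction in the opposite direction to Lemma~\ref{lemma:gmm-to-nlm}: given an \model of maximum arity $B+1$ and depth $D$, I will simulate it by a $B$-ary HO-GNN that carries, in the feature of each $B$-tuple $\vv$, a bundled copy of all the \model tensors $T_{i, 0}, T_{i, 1}, \ldots, T_{i, B}$ restricted to the sub-tuples of $\vv$ (plus, as usual, the one-hot node identities or enough positional information so that the HO-GNN block knows which coordinate it is updating). The point of Lemma~\ref{thm:layers-vs-its} is that we may assume the \model recurrently applies a single RRL layer, so it suffices to show that one \model iteration — updating $T_{i-1, j} \mapsto T_{i, j}$ for every $j \le B+1$ simultaneously — can be emulated by one (or a constant number of) HO-GNN message-passing rounds. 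The arities $j \le B$ are the easy part: the expand, permute, and reduce operations among arities $0, \ldots, B$ only ever look at sub-tuples and at one extra aggregated variable, so they can be computed locally inside the $B$-tuple feature, except for the reduce step from arity $B$ down to arity $B-1$ (and the $\text{AGG}$ over one index generally), which is exactly an aggregation over a neighbor coordinate and is therefore implemented by one round of the HO-GNN neighborhood sum in Eq.~\eqref{eq:hdgnn-mp}.

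The genuinely delicate part, and the reason Lemma~\ref{claim:d1-from-d} was stated first, is the arity-$(B+1)$ tensor $T_{i, B+1}$, which a $B$-ary HO-GNN cannot store. By Lemma~\ref{claim:d1-from-d}, $T_{i, B+1}[v_1, \ldots, v_{B+1}]$ is a fixed function (an expand followed by permute followed by $\nn_{B+1}$) of the $B{+}1$ sub-tuples $T_{i, B}[\ldots \widehat{v_\ell} \ldots]$. Now $T_{i,B+1}$ is only ever used downstream in two ways: it feeds $T_{i+1, B+1}$ (again a function of $B$-ary data, by the same lemma, so it never needs to be materialized on its own), and it feeds $T_{i+1, B}$ through a reduce over the last coordinate, i.e. $T^{R}_{i, B+1}[v_1, \ldots, v_B] = \text{Agg}_{v_{B+1}}\{T_{i, B+1}[v_1, \ldots, v_B, v_{B+1}]\}$. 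Substituting the Lemma~\ref{claim:d1-from-d} expression, this aggregation is a sum/max over $v_{B+1} \in V$ of a function of the $B$-tuples $(v_1, \ldots, v_B)$, $(v_1, \ldots, v_{B-1}, v_{B+1})$, $(v_1, \ldots, \widehat{v_\ell}, \ldots, v_B, v_{B+1})$ — and that is precisely a neighbor aggregation over the $B$-tuple $(v_1, \ldots, v_B)$ in the HO-GNN sense: each summand depends on $(v_1, \ldots, v_B)$ itself together with its neighbors obtained by swapping in $v_{B+1}$. Hence one HO-GNN round, with $\nn_1$ chosen to first reconstruct the relevant $T_{i,B+1}$ entries from the $B$-tuple features it receives and then apply the \model aggregation and MLP, computes the contribution of $T_{i, B+1}$ to $T_{i+1, B}$.

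Putting this together: each \model iteration is realized by a constant number of HO-GNN rounds (one suffices if we let $\nn_1$ absorb the composite expand/permute/$\nn_{B+1}$/reduce computation, since all of it is a function of the multiset of $B$-tuple neighbor features), all the lower-arity updates ride along inside the same feature vector, and so a depth-$D$ \model of max arity $B+1$ is realized by a $B$-ary HO-GNN with no more than $D$ iterations, using a wide enough feature vector and the identity-augmented inputs standard for HO-GNNs. The main obstacle to watch is bookkeeping the coordinate/positional information: the HO-GNN update is symmetric across the $B$ neighbor slots, whereas the \model operations (expand to a specific last coordinate, permute over $S_j$, reduce over the last coordinate) are position-sensitive, so I need to verify that carrying node identities (or, equivalently, the arity-$\le B$ tensors indexed by ordered sub-tuples) supplies enough information for a single symmetric message function to recover each ordered quantity it needs — this is the one place where the proof must be done carefully rather than by appeal to locality, and it is exactly where Lemma~\ref{claim:d1-from-d} does the work of certifying that no arity-$(B{+}1)$ state is lost. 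Combining Lemmas~\ref{lemma:gmm-to-nlm} and~\ref{lemma:nlm-to-gmm} then gives the expressiveness equivalence of Theorem~\ref{thm:equivalence}.
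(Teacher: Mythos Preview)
Your proposal is correct and follows essentially the same route as the paper: pack all arity-$\le B$ \model tensors (indexed by the ordered sub-tuples of $\vv$) into each $B$-tuple's HO-GNN feature, and use Lemma~\ref{claim:d1-from-d} to show that the only place arity $B{+}1$ enters --- the reduce feeding $T_{i+1,B}$ --- is exactly one HO-GNN neighbor aggregation over $u$. One caveat: drop the appeal to ``one-hot node identities,'' which are not part of the anonymous HO-GNN model and would change the expressiveness class; the paper (and your own parenthetical alternative) handles position-sensitivity by storing, for every sub-tuple of $\vv$, all $b!$ permuted copies of the arity-$b$ feature, and the $\concat$ in Eq.~\eqref{eq:hdgnn-mp} is already ordered across the $B$ neighbor slots, so your worry about symmetry of the update is unfounded and no extra identity information is needed.
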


\begin{proof}
We can realize the Expand and Reduce operation with only the $B$-dimensional features using the broadcast message passing scheme. Note that Expand and Reduce between $B$-dimensional features and $(B+1)$-dimensional features in the \model is a special case where claim \ref{claim:d1-from-d} is applied.

Let's start with Expand and Reduce operations between features of dimension $B$ or lower. For the $b$-dimensional feature in the \model, we keep $n^{\underline{b}}n^{B-b}$\footnote{$n^{\underline{k}} = n\times (n-1) \times \cdots \times (n-k+1)$.} copies of it and store them the representation of every $B$-tuple who has a sub-tuple\footnote{The sub-tuple does not have to be consecutive, but instead can be a any subset of the tuple that keeps the element order.} that is a permutation of the $b$-tuple. That is, for each $B$-tuple in the $B$-ary HO-GNN, for its every sub-tuple of length $b$, we store $b!$ representations corresponding to every permutation of the $b$-tuple in the \model. Keeping representation for all sub-tuple permutations make it possible to realize the Permute operation. Also, it is easy to notice that Expand operation is realized already, as all features with dimension lower than $B$ are naturally expanded to $B$ dimension by filling in all possible combinations of the rest elements. Finally, the Reduce operation can be realized using a broadcast casting message passing on certain position of the tuple.

Now let's move to the special case -- the Expand and Reduce operation between features of dimensions $B$ and $B+1$. Claim \ref{claim:d1-from-d} suggests how the $(B+1)$-dimensional features are stored in $B$-dimensional representations in GNNs, and we now show how the Reduce can be realized by message passing.

We first bring in claim \ref{claim:d1-from-d} to the HO-GNN message passing, where we have $\received_{i}[\vv]$ to be

\begin{align*}
    \sum_u\left(\nn_1\left(T_{i-1,B}[v_2,v_3,\cdots,v_{B}, u],T_{i-1,B}[v_1,v_3,\cdots,v_{B}, u],\cdots,T_{(i-1),B}[v_1,v_2,\cdots,v_{B}]\right)\right)
\end{align*}

Note that the last term $T_{i-1,B}[v_1,v_2,\cdots,v_{B}]$ is contained in $H_{i-1}(v)$ in equation \ref{eq:hdgnn-mp-g}, and other terms are contained in $H_{i-1}(v')$ for $v' \in \neighbors(\vv, u)$. Hence, equation \ref{eq:hdgnn-mp-g} is sufficient to simulate the Reduce operation.
\end{proof}

\begin{theorem}
$B$-ary HO-GNNs are equally expressive as \modelp with maximum arity $B+1$.
\label{thm:eq-nlm-gnn}
\end{theorem}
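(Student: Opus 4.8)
The plan is to deduce the theorem directly from the two reduction lemmas already established, together with the definition of expressiveness equivalence. Recall that equivalence requires reductions in \emph{both} directions: a map sending each $B$-ary HO-GNN to an equivalent \model of maximum arity $B+1$, and a map going back. So the proof is really just an assembly step, and I would present it as such.

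First I would invoke Lemma~\ref{lemma:gmm-to-nlm} for the forward reduction: every $B$-ary HO-GNN of depth $D$ is realized by a \model of maximum arity $B+1$ and depth $2D$. This yields a reduction $r_1$ from the HO-GNN family into the \model family in the sense of the expressiveness-reduction definition (same outputs on all inputs). For constant-depth models the blow-up from $D$ to $2D$ is immaterial; for adaptive-depth models it changes the depth only by a constant factor, so \modelfamily$[O(f(n)), B+1]$ still contains the image of $B$-ary HO-GNNs running $O(f(n))$ rounds. Next I would invoke Lemma~\ref{lemma:nlm-to-gmm} for the backward reduction: every \model of maximum arity $B+1$ and depth $D$ is realized by a $B$-ary HO-GNN with at most $D$ iterations, giving a reduction $r_2$ with no depth penalty. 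Lemma~\ref{thm:layers-vs-its} lets us move freely between the weight-shared/recurrent formulation and the "distinct weights per layer" formulation, so there is no mismatch about which object is being reduced. Since $r_1$ and $r_2$ both exist, the two families reduce to each other, and by the definition of expressiveness equivalence they are equally expressive; because that definition is stated for regression as well as classification, the conclusion is not restricted to decision tasks.

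The only genuine obstacle is already discharged inside the lemmas, specifically in Lemma~\ref{lemma:nlm-to-gmm}: one must argue that the truly $(B+1)$-ary intermediate tensors of the \model buy no power over a $B$-ary HO-GNN. The key point — isolated as Lemma~\ref{claim:d1-from-d} — is that a $(B+1)$-ary \model tensor is only ever obtained by \emph{expanding} its $B$-ary sub-tuples, so it can be reconstructed on the fly within a single HO-GNN message-passing step and never needs to be materialized; the matching \emph{reduce} over the extra coordinate is exactly the sum that HO-GNN message passing computes. Checking that the \expand/\permute/\reduce primitives at arities $\le B$ are simulated by broadcasting indexed copies across $B$-tuples is routine bookkeeping. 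Hence, beyond citing Lemmas~\ref{lemma:gmm-to-nlm} and~\ref{lemma:nlm-to-gmm} and the equivalence definition, there is nothing further to prove, and Theorem~\ref{thm:eq-nlm-gnn} follows immediately.

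If I wanted to be maximally careful, the one place I would double-check is that "realize" as used in the two lemmas coincides exactly with "$A$ and $r(A)$ have the same outputs on all inputs" from the expressiveness-reduction definition, including agreement on the output arity (HO-GNNs output $B$-ary features while the relevant \modelp output up to arity $B+1$, so one fixes attention on the arity that the target graph-reasoning function actually uses). This is a definitional matching rather than a mathematical difficulty, so I would handle it in a sentence and then conclude.
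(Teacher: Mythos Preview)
Your proposal is correct and matches the paper's approach exactly: the paper's proof is the single sentence ``This is a direct conclusion by combining Lemma~\ref{lemma:gmm-to-nlm} and Lemma~\ref{lemma:nlm-to-gmm}.'' Your additional remarks about depth blow-up, Lemma~\ref{thm:layers-vs-its}, and output-arity matching are reasonable sanity checks but go beyond what the paper itself records.
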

\begin{proof}
This is a direct conclusion by combining Lemma~\ref{lemma:gmm-to-nlm} and Lemma~\ref{lemma:nlm-to-gmm}.
\end{proof}

\subsection{Expressiveness of hypergraph convolution and attention}\label{app:hgconv}

There exist other variants of hypergraph neural networks. In particular, hypergraph convolution\citep{feng2019hypergraph,yadati2018hypergcn,bai2021hypergraph}, attention\citep{ding2020more} and message passing\citep{huang2021unignn} focus on updating node features instead of tuple features through hyperedges . These approaches can be viewed as instances of hypergraph neural networks, and they have smaller time complexity because they do not model all high-arity tuples. However, they are less expressive than the standard hypergraph neural networks with equal max arity.

These approaches can be formulated to two steps at each iteration. At the first step, each hyperedge is updated by the features of nodes it connects.
\begin{eqnarray}\label{eqn:hcexp}
h_{i,e}=\agg_{v \in e} f_{i-1,v}
\end{eqnarray}

At the second step, each node is updated by the features of hyperedges connecting it.
\begin{eqnarray}\label{eqn:hcred}
f_{i,v}=\agg_{v \in e} h_{i,e}
\end{eqnarray}

where $f_{i,v}$ is the feature of node $v$ at iteration $i$, and $h_{i,v}$ is the aggregated message passing through hyperedge $e$ at iteration $i+1$.

It is not hard to see that \ref{eqn:hcexp} can be realized by $B$ iterations of \model layers with Expand operations where $B$ is the max arity of hyperedges. This can be done by expanding each node feature to every high arity features that contain the node, and aggregate them at the tuple corresponding to each hyperedge. Then, \ref{eqn:hcred} can also be realized by $B$ iterations of \model layers with Reduce operations, as the tuple feature will finally be reduced to a single node contained in the tuple.

This approach has lower complexity compared to the GNNs we study applied on hyperedges, because it only requires communication between nodes and hyperedges connecting to them, which takes $O(|V|\cdot|E|)$ time at each iteration. Compared to them, \modelp takes $O(|V|^B)$ time because \modelp keep features of every tuple with max arity $B$, and allow communication from tuples to tuples instead of between tuples and single nodes. An example is provided below that this approach can not solve while \modelp can.

Consider a graph with $6$ nodes and $6$ edges forming two triangles $(1,2,3)$ and $(4,5,6)$. Because of the symmetry, the representation of each node should be identical throughout hypergraph message passing rounds. Hence, it is impossible for these models to conclude that $(1,2,3)$ is a triangle but $(4,2,3)$ is not, based only on the node representations, because they are identical. In contrast, \modelp with max arity $3$ can solve them (as standard triangle detection problem in Table \ref{tab:expressive-level}).

\subsection{The Time and Space Complexity of \modelp}\label{app:complexity}

Handling high-arity features and using deeper models usually increase the computational cost in terms of time and space. As an instance that use the architecture of RelNN, NLMs with depth $D$ and max arity $B$ takes $O(Dn^B)$ time when applying to graphs with size $n$. This is because both Expand and Reduce operation have linear time complexity with respect to the input size (which is $O(n^B)$ at each iteration). If we need to record the computational history (which is typically the case when training the network using back propagation), the space complexity is the same as the time complexity.

GNNs applied to $(B-1)$-ary hyperedges and depth $D$ are equally expressive as RelNNs with depth $O(D)$ and max arity $B$. Though up to $(B-1)$-ary features are kept in their architecture, the broadcast message passing scheme scale up the complexity by a factor of $O(n)$, so they also have time and space complexity $O(Dn^B)$. Here the length of feature tensors $W$ is treated as a constant. %
\section{Arity and Depth Hierarchy: Proofs and Analysis}\label{app:exp-reduce}

\subsection{Proof of Theorem~\ref{thm:arity-hierarchy}: Arity Hierarchy.}
\label{app:arity-hierarchy}

\cite{morris2019weisfeiler} have connected high-dimensional GNNs with high-dimensional WL tests. Specifically, they showed that the $B$-ary HO-GNNs are equally expressive as $B$-dimensional WL test on graph isomorphism test problem. In Theorem \ref{thm:eq-nlm-gnn} we proved that $B$-ary HO-GNNs are equivalent to \model of maximum arity $B+1$ in terms of expressiveness. Hence, \model of maximum arity $B+1$ can distinguish if two non-isomorphic graphs if and only if $B$-dimensional WL test can distinguish them.

However, \citet{cai1992optimal} provided an construction that can generate a pair of non-isomorphic graphs for every $B$, which can not be distinguished by $(B-1)$-dimensional WL test but can be distinguished by $B$-dimensional WL test. Let $G^{1}_B$ and $G^2_B$ be such a pair of graph.

Since \model of maximum arity $B+1$ is equally expressive as $B$-ary HO-GNNs, there must be such a \model that classify $G^{1}_B$ and $G^2_B$ into different label. However, such \model can not be realized by any \model of maximum arity $B$ because they are proven to have identical outputs on $G^{1}_B$ and $G^2_B$.

In the other direction, \modelp of maximum arity $B+1$ can directly realize \modelp of maximum arity $B$, which completes the proof.

\subsection{Upper Depth Bound for Unbounded-Precision \model.}
\label{app:depth-upperbound}

The idea for proving an upper bound on depth is to connect \modelp to WL-test, and use the $O(n^B)$ upper bound on number of iterations for $B$-dimensional test \citep{kiefer2016upper}, and FOC formula is the key connection.

For any fixed $n$, $B$-dimensional WL test divide all graphs of size $n$, $\fG_{=n}$, into a set of equivalence classes $\{\fC_1,\fC_2,\cdots,\fC_m\}$, where two graphs belong to the same class if they can not be distinguished by the WL test. We have shown that \modelp of maximum arity $(B+1)$ must have the same input for all graphs in the same equivalence class. Thus, any \model of maximum arity $B+1$ can be view as a labeling over $\fC_1,\cdots,\fC_m$.

Stated by \citet{cai1992optimal}, $B$-dimensional WL test are as powerful as FOC$_{B+1}$ in differentiating graphs graphs. Combined with the $O(n^B)$ upper bound of WL test iterations, for each $\fC_i$, there must be an FOC$_{B+1}$ formula of quantifier depth $O(n^B)$ that exactly recognize $\fC_i$ over $\fG_{=n}$.

Finally, with unbounded precision, for any $f(n)$, \model of maximum arity $B+1$ and depth $f(n)$ can compute all FOC$_{B+1}$ formulas with quantifier depth $f(n)$. Note that there are finite number of such formula because the supscript of counting quantifiers is bounded by $n$.

For any graph in some class $\fC_i$, the class can be  determined by evaluating these FOC formulas, and then the label is determined. Therefore, any \model of maximum arity $B+1$ can be realized by a \model of maximum arity $B+1$ and depth $O(n^B)$.

\subsection{Graph Problems}\label{app:graph-problems}
\begin{table}[h]
    \centering
    \small
    \begin{tabular}{cll}
    \toprule
    $B = 4$ &
    4-Clique Detection \modelfamily[$O(1)$, 4] & 4-Clique Count \modelfamily[$O(1)$, 4] \\
    \midrule
    \multirow{4}{*}{$B = 3$} &
    Triangle Detection \modelfamily[$O(1)$,3] & All-Pair Distance  \modelfamily[$O(\log n)$, 3]$^\star$ \\
    & Bipartiteness \modelfamily[$O(\log n)$, 3]$^\star$ & \\
    & All-Pair Connectivity \modelfamily[$O(\log n)$, 3]$^\star$ & \\
    & All-Pair Connectivity-$k$ \modelfamily[$O(\log k)$, 3]$^\star$ & \\
    \midrule
    \multirow{4}{*}{$B = 2$} & FOC$_2$ Realization \modelfamily[$\cdot$, 2] \citep{barcelo2020logical}  & S-T Distance \modelfamily[$O(n)$, 2] \\
    & 3/4-Link Detection \modelfamily[$O(1)$, 2] & Max Degree \modelfamily[$O(1)$, 2] \\
    & S-T Connectivity \modelfamily[$O(n)$, 2] & Max Flow \modelfamily[$O(n^3)$, 2]$^\star$ \\
    & S-T Connectivity-$k$ \modelfamily[$O(k)$, 2] & \\
    \midrule
    \multirow{1}{*}{$B = 1$} & Node Color Majority: \modelfamily[$O(1)$, 1] & Count Red Nodes: \modelfamily[$O(1)$, 1] \\
    \midrule
    & {\bf Classification Tasks} & {\bf Regression Tasks}\\
    \bottomrule

    \end{tabular}
    \caption{The minimum depth and arity of \modelp for solving graph classification and regression tasks. The $^{\star}$ symbol indicates that these are conjectured lower bounds.}
    \label{tab:expressive-level}
\end{table}

We list a number of examples for graph classification and regression tasks, and we provide the definitions and the current best known \modelp for learning these tasks from data. For some of the problems, we will also show why they can not be solved by a simpler problems, or indicate them as open problems.

\xhdr{Node Color Majority.}
Each node is assigned a color $c \in \fC$ where $\fC$ is a finite set of all colors. The model needs to predict which color the most nodes have.

Using a single layer with sum aggregation, the model can count the number of nodes of color $c$ for each $c\in \fC$ on its global representation.

\xhdr{Count Red Nodes.}
Each node is assigned a color of red or blue. The model needs to count the number of red nodes.

Similarly, using a single layer with sum aggregation, the model can count the number of red nodes on its global representation.

\xhdr{3-Link Detection.}
Given an unweighted, undirected graph, the model needs to detect whether there is a triple of nodes $(a,b,c)$ such that $a \neq c$ and $(a,b)$ and $(b,c)$ are edges.

This is equivalent to check whether there exists a node with degree at least $2$. We can use a Reduction operation with sum aggregation to compute the degree for each node, and then use a Reduction operation with max aggregation to check whether the maximum degree of nodes is greater than or equal to $2$.

Note that this can not be done with $1$ layer, because the edge information is necessary for the problem, and they require at least $2$ layers to be passed to the global representation.

\xhdr{4-Link Detection.}
Given an unweighted undirected graph, the model needs to detect whether there is a 4-tuple of nodes $(a,b,c,d)$ such that $a \neq c, b \neq d$ and $(a,b), (b,c), (c,d)$ are edges (note that a triangle is also a 4-link).

This problem is equivalent to check whether there is an edge between two nodes with degrees $\geq 2$. We can first reduce the edge information to compute the degree for each node, and then expand it back to 2-dimensional representations, so we can check for each edge if the degrees of its ends are $\geq 2$. Then the results are reduced to the global representation with existential quantifier (realized by max aggregation) in 2 layers.

\xhdr{Triangle Detection.}
Given a unweighted undirected graph, the model is asked to determine whether there is a triangle in the graph i.e. a tuple $(a,b,c)$ so that $(a,b), (b,c), (c,a)$ are all edges.

This problem can be solved by \model[$4$,$3$]: we first expand the edge to 3-dimensional representations, and determine for each 3-tuple if they form a triangle. The results of 3-tuples require 3 layers to be passed to the global representation.

We can prove that Triangle Detection indeed requires breadth at least 3. Let $k$-regular graphs be graphs where each node has degree $k$. Consider two $k$-regular graphs both with $n$ nodes, so that exactly one of them contains a triangle\footnote{Such construction is common. One example is $k=2,n=6$, and the graph may consist of two separated triangles or one hexagon}. However, \modelp of breadth 2 has been proven not to be stronger than WL test on distinguish graphs, and thus can not distinguish these two graphs (WL test can not distinguish any two $k$-regular graphs with equal size).

\xhdr{4-Clique Detection and Counting.}
Given an undirected graph, check existence of, or count the number of tuples $(a,b,c,d)$ so that there are edges between every pair of nodes in the tuple.

This problem can be easily solved by a \model with breadth 4 that first expand the edge information to the 4-dimensional representations, and for each tuple determine whether its is a 4-clique. Then the information of all 4-tuples are reduced 4 times to the global representation (sum aggregation can be used for counting those).

Though we did not find explicit counter-example construction on detecting 4-cliques with \modelp of breadth 3, we suggest that this problem can not be solved with \modelp with 3 or lower breadth.

\xhdr{Connectivity.}
\label{app:problems:connectivity}
The connectivity problems are defined on unweighted undirected graphs. S-T connectivity problems provides two nodes $S$ and $T$ (labeled with specific colors), and the model needs to predict if they are connected by some edges. All pair connectivity problem require the model to answer for every pair of nodes. Connectivity-$k$ problems have an additional requirement that the distance between the pair of nodes can not exceed $k$.

S-T connectivity-$k$ can be solved by a \model of breadth 2 with $k$ iterations. Assume $S$ is colored with color $c$, at every iteration, every node with color $c$ will spread the color to its neighbors. Then, after $k$ iterations, it is sufficient to check whether $T$ has the color $c$.

With \modelp of breadth 3, we can use $O(\log k)$ matrix multiplications to solve connectivity-$k$ between every pair of nodes. Since the matrix multiplication can naturally be realized by \modelp of breadth 3 with two layers. All-pair connectivity problems can all be solved with $O(\log k)$ layers.

\begin{theorem}[S-T connectivity-$k$ with \modelfamily]
S-T connectivity-$k$ can not be solved by a  \model of maximum arity within $o(k)$ iterations.
\end{theorem}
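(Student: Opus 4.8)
\emph{Proof proposal.} The plan is to exhibit, for every $k$, two graphs on which S-T Connectivity-$k$ has opposite answers but which no depth-$o(k)$ \model of maximum arity $2$ can distinguish (this is the appendix version of the first part of Theorem~\ref{thm:connectivity}). Let $G^{\mathrm{yes}}_k$ consist of two vertex-disjoint paths, each with $k$ edges, where on each path one endpoint is colored $S$ and the other $T$; let $G^{\mathrm{no}}_k$ be the same two paths but with one path carrying the color $S$ on \emph{both} of its endpoints and the other carrying $T$ on both of its endpoints. In $G^{\mathrm{yes}}_k$ an $S$-vertex is joined to a $T$-vertex by a path of length exactly $k$, so the answer is ``yes''; in $G^{\mathrm{no}}_k$ every $S$-vertex lies in a different component from every $T$-vertex, so the answer is ``no''. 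By construction the two graphs have the same vertex count, edge count, degree sequence, and multiset of node colors. More importantly, for every radius $r$ with $2r < k$ the multiset over all vertices of the (colored, rooted) $r$-ball is \emph{identical} in $G^{\mathrm{yes}}_k$ and $G^{\mathrm{no}}_k$: such a ball is always a path segment of radius $r$ carrying at most one colored endpoint, and each ball type ``$S$-endpoint at distance $a$'', each ``$T$-endpoint at distance $a$'', and each uncolored interior ball occurs with the same multiplicity (two copies each) in both graphs.

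Next I would show that a maximum-arity-$2$ \model of depth $d$ produces the same output on $G^{\mathrm{yes}}_k$ and $G^{\mathrm{no}}_k$ whenever $d \le k-1$. Two structural facts about \modelfamily$[\cdot,2]$ drive this. First, a pair feature $T_{i,2}[u,v]$ is only ever recombined with pair features on the same index pair $(u,v)$ and $(v,u)$ and with the node features of $u$ and of $v$; no information ever flows between distinct pairs at arity $2$. Unrolling, $T_{d,2}[u,v]$ is a fixed function of the edge indicator $\mathbf 1[(u,v)\in E]$ and of the node-feature histories $(T_{i,1}[u])_{i\le d}$ and $(T_{i,1}[v])_{i\le d}$. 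Second, because a node feature must be expanded to arity $2$ and reduced back before it can reach another node, it takes two layers to move node-level information one hop along the graph; hence $T_{d,1}[v]$ is determined by the radius-$\lfloor d/2\rfloor$ ball around $v$ together with the arity-$0$ (global) quantities computed along the way. A straightforward induction on depth then shows that every arity-$0$ tensor, being a permutation-invariant aggregate of the lower-arity features, is itself a function only of the multiset of radius-$\le\lfloor d/2\rfloor$ balls and of the multiset of edge indicators. For $d\le k-1$ we have $2\lfloor d/2\rfloor < k$, so by the previous paragraph all of these quantities agree between the two graphs; therefore the node features of $S$, of $T$, and of every other vertex agree, the pair feature $T_{d,2}[S,T]$ agrees (its edge indicator being $0$ in both), and the final output, a permutation-invariant function of the whole feature tensor, agrees.

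Finally, for any $f$ with $f(k)=o(k)$ there is a $k_0$ with $f(k)\le k-1$ for all $k\ge k_0$, so \modelfamily$[O(f(k)),2]$ returns the wrong answer on $\{G^{\mathrm{yes}}_k,G^{\mathrm{no}}_k\}$ for all large $k$ and cannot realize S-T Connectivity-$k$; in particular depth $\Omega(k)$ is necessary. I expect the main obstacle to be the second step, specifically the bookkeeping that the global (arity-$0$) aggregations cannot help: one must check carefully that, although every node learns global statistics such as ``there is an $S$ and a $T$ somewhere in the graph'', those statistics are permutation invariants that coincide for $G^{\mathrm{yes}}_k$ and $G^{\mathrm{no}}_k$, and so cannot be parlayed into the \emph{correlation} ``the $S$ and the $T$ lie in the same component'', which genuinely requires a signal to traverse the length-$k$ path. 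A lower-tech alternative is to invoke the equivalence between \modelfamily$[\cdot,2]$ and $1$-dimensional Weisfeiler--Leman established earlier and argue that a bounded number of rounds of color refinement with global read-outs cannot separate two graphs whose bounded-radius ball multisets and global statistics agree; the honest work is again in pinning down the effect of the read-outs and the two-layers-per-hop slowdown.
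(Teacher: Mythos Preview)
Your approach is essentially the paper's: build two instances that are a pair of disjoint length-$k$ paths differing only in where the $S$/$T$ labels sit, and argue via a locality/WL-type bound that a depth-$o(k)$ arity-$2$ \model{} cannot separate them. The paper invokes the equivalence with $1$-ary GNNs and runs a color-refinement induction directly, exhibiting a label-preserving bijection between the two graphs for the first $\lfloor k/2\rfloor-1$ rounds; your unrolling of the arity-$0$/$1$/$2$ interactions and the ``two layers per hop'' accounting is a more architectural version of the same locality bound, and your worry about the global arity-$0$ read-outs is handled in the paper simply by refining each node with the multiset of \emph{non-neighbor} labels as well as neighbor labels.

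One point you should fix: your instances carry \emph{two} $S$-colored and \emph{two} $T$-colored vertices, but S-T Connectivity-$k$ as defined here designates a single node $S$ and a single node $T$; strictly speaking your $G^{\mathrm{yes}}_k$ and $G^{\mathrm{no}}_k$ are not in the domain of the function you are trying to show unrealizable, so indistinguishability on them does not immediately contradict realizability. The paper avoids this by using the \emph{same} pair of paths in both instances and moving only one label: $S=u_1,\,T=u_k$ versus $S=u_1,\,T=v_k$, leaving the remaining two path-endpoints uncolored. Your ball-multiset argument transfers to this construction unchanged (one $S$-endpoint ball, one $T$-endpoint ball, two uncolored-endpoint balls at each distance, identical interiors), so the repair is immediate.
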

\begin{proof}
We construct two graphs each has $2k$ nodes $u_1,\cdots,u_k,v_1,\cdots,v_k$. In both graph, there are edges $(u_i,u_{i+1})$ and $(v_i,v_{i+1})$ for $1 \leq i \leq k-1$ i.e. there are two links of length $k$. We then set $S=u_1,T=u_n$ and $S=u_1,T=v_n$ the the two graphs.

We will analysis GNNs as \modelp are proved to be equivalent to them by scaling the depth by a constant factor. Now consider the node refinement process where each node $x$ is refined by the multiset of labels of $x$'s neighbots and the multiiset of labels of $x$'s non-neighbors.

Let $C^{(i)}_j(x)$ be the label of $x$ in graph $j$ after $i$ iterations, at the beginning, WLOG, we have

\[
C^{(0)}_1(u_1)=1, C^{(0)}_1(u_n)=2 \\
C^{(0)}_1(u_1)=1, C^{(0)}_1(v_n)=2
\]
and all other nodes are labeled as $0$.

Then we can prove by induction: after $i\leq\frac{k}{2}-1$ iterations, for $1 \leq t \leq i+1$ we have
\[C^{(u_t)}_1=C^{(i)}_2(u_t), C^{(v_t)}_1=C^{(i)}_2(v_t) \]
\[C^{(u_{k-t+1})}_1=C^{(i)}_2(v_{k-t+1}), C^{(v_{k-t+1})}_1=C^{(i)}_2(u_{k-t+1}) \]

and for $i+2 \leq t \leq k-i-1$ we have
\[C^{(u_t)}_1=C^{(i)}_2(u_t), C^{(v_t)}_1=C^{(i)}_2(v_t)\]

This is true because before $\frac{k}{2}$ iterations are run, the multiset of all node labels are identical for the two graphs (say $S^{(i)}$). Hence each node $x$ is actually refined by its neighbors and $S^{(i)}$ where $S^{(i)}$ is the same for all nodes. Hence, before running $\frac{k}{2}$ iterations when the message between $S$ and $T$ finally meets in the first graph, GNN can not distinguish the two graphs, and thus can not solve the connectivity with distance $k-1$.
\end{proof}

\xhdr{Max Degree.}
The max degree problem gives a graph and ask the model to output the maximum degree of its nodes.

Like we mentioned in 3-link detection, one layer for computing the degree for each node, and another layer for taking the max operation over nodes should be sufficient.

\xhdr{Max Flow.}
The Max Flow problem gives a directional graph with capacities on edges, and indicate two nodes $S$ and $T$. The models is then asked to compute the amount of max-flow from $S$ to $T$.

Notice that the Breadth First Search (BFS) component in Dinic's algorithm\citep{dinic1970algorithm} can be implemented on \modelp as they does not require node identities (all new-visited nodes can augment to their non-visited neighbors in parallel). Since the BFS runs for $O(n)$ iteration, and the Dinic's algorithm runs BFS $O(n^2)$ times, the max-flow can be solved by \modelp with in $O(n^3)$ iterations.

\xhdr{Distance.} Given a graph with weighted edges, compute the length of the shortest between specified node pair (S-T Distance) or all node pairs (All-pair Distance).

Similar to Connectivity problems, but Distance problems now additionally record the minimum distance from $S$ (for S-T) or between every node pairs (for All-pair), which can be updated using min operator (using Min-plus matrix multiplication for All-pair case).
\section{Experiments}\label{app:exp-detail}
We now study how our theoretical results on model expressiveness and learning apply to relational neural networks trained with gradient descent on practically meaningful problems. We begin by describing two synthetic benchmarks: graph substructure detection and relational reasoning.

In the graph substructure detection dataset, there are several tasks of predicting whether there input graph containd a sub-graph with specific structure. The tasks are: \textit{3-link} (length-3 path), \textit{4-link}, \textit{triangle}, and \textit{4-clique}. These are important graph properties with many potential applications.

The relational reasoning dataset is composed of two family-relationship prediction tasks and two connectivity-prediction tasks. They are all {\it binary edge classification} tasks. In the family-relationship prediction task, the input contains the {\it mother} and {\it father} relationships, and the task is to predict the {\it grandparent} and {\it uncle} relationships between all pairs of entities.  In the connectivity-prediction tasks, the input is the edges in an undirected graph and the task is to predict, for all pairs of nodes, whether they are connected with a path of length $\le 4$ ({\it connectivity-4}) and whether they are connected with a path of arbitrary length ({\it connectivity}). The data generation for all datasets is included in Appendix~\ref{app:exp-detail}.

\subsection{Experiment Setup}
For all problems, we have 800 training samples, 100 validation samples, and 300 test samples for each different $n$ we are testing the models on.

We then provide the details on how we synthesize the data. For most of the problems, we generate the graph by randomly selecting from all potential edges i.e. the Erdős–Rényi model. We sample the number of edges around $n, 2n, n \log n$ and $n^2/2$. For all problems, with $50\%$ probability the graph will first be divided into $2,3,4$ or $5$ parts with equal number of components, where we use the first generated component to fill the edges for rest of the components. Some random edges are added afterwards. This make the data contain more isomorphic sub-graphs, which we found challenging empirically.

\paragraph{Substructure Detection.}
To generate a graph that does not contain a certain substructure, we randomly add edges when reaching a maximal graph not containing the substructure or reaching the edge limit. For generating a graph that does contain the certain substructure, we first generate one that does not contain, and then randomly replace present edges with missing edges until we detect the substructure in the graph. This aim to change the label from ``No'' to ``Yes'' while minimizing the change to the overall graph properties, and we found that data generated using edge replacing is much more difficult for neural networks compared to random generated graphs from scratch.

\paragraph{Family Tree.}
We generate the family trees using the algorithm modified from \cite{dong2018neural}. We add people to the family one by one. When a person is added, with probability $p$ we will try to find a single woman and a single man, get them married and let the new children be their child, and otherwise the new person is introduced as a non-related person. Every new person is marked as single and set the gender with a coin flip.

We adjust $p$ based on the ratio of single population: $p=0.7$ when more than $40\%$ of the population are single, and $p=0.3$ when less than $20\%$ of the population are single, and $p=0.5$ otherwise.

\paragraph{Connectivity.}
For connectivity problems, we use the similar generation method as the substructure detection. We sample the query pairs so that the labels are balanced.

\subsection{Model Implementation Details}

For all models, we use a hidden dimension $128$ except for 3-dimensional HO-GNN and 4-dimensional NLM where we use hidden dimension $64$.

All model have $4$ layers that each has its own parameters, except for connectivity where we use the recurrent models that apply the second layer $k$ times, where $k$ is sampled from integers in $[2\log n, 3\log n]$. The depths are proven to be sufficient for solving these problems (unless the model itself can not solve).

All models are trained for 100 epochs using adam optimizer with learning rate $3 \times 10^{-4}$ decaying at epoch 50 and 80.

We have varied the depth, the hidden dimension, and the activation function of different models. We select sufficient hidden dimension and depth for every model and problem (i.e., we stop when increasing depth or hidden dimension doesn't increase the accuracy). We tried linear, ReLU, and Sigmoid activation functions, and ReLU performed the best overall combinations of models and tasks.

\subsection{Results}
\begin{table}[tp]
    \centering
    \small
    \setlength{\tabcolsep}{2pt}
    \begin{tabular}{cccccccccc}
    \toprule
      & &  \multicolumn{2}{c}{3-link} & \multicolumn{2}{c}{4-link} & \multicolumn{2}{c}{triangle} & \multicolumn{2}{c}{4-clique} \\
      Model & Agg. &  $n=10$ & $n=30$ &  $n=10$ & $n=30$ &  $n=10$ & $n=30$ &  $n=10$ & $n=30$ \\
      \midrule

      \multirow{2}{*}{1-ary GNN} & Max &
70.0$_{\pm 0.0}$ & 82.7$_{\pm 0.0}$ & 92.0$_{\pm 0.0}$ & 91.7$_{\pm 0.0}$ & 73.7$_{\pm 3.2}$ & 50.2$_{\pm 1.8}$ & 55.3$_{\pm 4.0}$ & 46.2$_{\pm 1.3}$  \\

       & Sum &
100.0$_{\pm 0.0}$ & 89.4$_{\pm 0.4}$ & 100.0$_{\pm 0.0}$ & 86.1$_{\pm 1.2}$ & 77.7$_{\pm 8.5}$ & 48.6$_{\pm 1.6}$ & 53.7$_{\pm 0.6}$ & 55.2$_{\pm 0.8}$  \\

      \multirow{2}{*}{2-ary NLM} & Max &
65.3$_{\pm 0.6}$ & 54.0$_{\pm 0.6}$ & 93.0$_{\pm 0.0}$ & 95.7$_{\pm 0.0}$ & 51.0$_{\pm 1.7}$ & 49.2$_{\pm 0.4}$ & 55.0$_{\pm 0.0}$ & 45.7$_{\pm 0.0}$  \\

       & Sum &
100.0$_{\pm 0.0}$ & 88.3$_{\pm 0.0}$ & 100.0$_{\pm 0.0}$ & 67.4$_{\pm 16.4}$ & 82.0$_{\pm 2.6}$ & 48.3$_{\pm 0.0}$ & 53.0$_{\pm 0.0}$ & 54.4$_{\pm 1.5}$  \\

      \midrule

      \multirow{2}{*}{2-ary GNN} & Max &
78.7$_{\pm 0.6}$ & 76.0$_{\pm 17.3}$ & 97.7$_{\pm 4.0}$ & 98.6$_{\pm 2.5}$ & 100.0$_{\pm 0.0}$ & 100.0$_{\pm 0.0}$ & 55.0$_{\pm 0.0}$ & 45.7$_{\pm 0.0}$  \\

       & Sum &
100.0$_{\pm 0.0}$ & 51.2$_{\pm 7.9}$ & 100.0$_{\pm 0.0}$ & 45.7$_{\pm 7.6}$ & 100.0$_{\pm 0.0}$ & 49.2$_{\pm 1.0}$ & 61.0$_{\pm 5.6}$ & 54.3$_{\pm 0.0}$  \\

      \multirow{2}{*}{3-ary NLM} & Max &
100.0$_{\pm 0.0}$ & 100.0$_{\pm 0.0}$ & 100.0$_{\pm 0.0}$ & 100.0$_{\pm 0.0}$ & 100.0$_{\pm 0.0}$ & 100.0$_{\pm 0.0}$ & 59.0$_{\pm 6.9}$ & 45.9$_{\pm 0.4}$  \\

       & Sum &
100.0$_{\pm 0.0}$ & 87.6$_{\pm 11.0}$ & 100.0$_{\pm 0.0}$ & 65.4$_{\pm 14.3}$ & 100.0$_{\pm 0.0}$ & 80.6$_{\pm 8.8}$ & 73.7$_{\pm 13.8}$ & 53.3$_{\pm 8.8}$  \\

      \midrule

      \multirow{2}{*}{3-ary GNN} & Max &
79.0$_{\pm 0.0}$ & 86.0$_{\pm 0.0}$ & 100.0$_{\pm 0.0}$ & 100.0$_{\pm 0.0}$ & 100.0$_{\pm 0.0}$ & 100.0$_{\pm 0.0}$ & 84.0$_{\pm 0.0}$ & 93.3$_{\pm 0.0}$ \\

       & Sum &
100.0$_{\pm 0.0}$ & 84.1$_{\pm 18.6}$ & 100.0$_{\pm 0.0}$ & 61.1$_{\pm 15.0}$ & 100.0$_{\pm 0.0}$ & 95.1$_{\pm 7.3}$ & 80.5$_{\pm 0.7}$ & 66.2$_{\pm 19.6}$  \\

      \multirow{2}{*}{4-ary NLM} & Max &
100.0$_{\pm 0.0}$ & 100.0$_{\pm 0.0}$ & 100.0$_{\pm 0.0}$ & 100.0$_{\pm 0.0}$ & 100.0$_{\pm 0.0}$ & 100.0$_{\pm 0.0}$ & 82.0$_{\pm 1.7}$ & 93.1$_{\pm 0.2}$ \\

       & Sum &
100.0$_{\pm 0.0}$ & 59.1$_{\pm 5.3}$ & 100.0$_{\pm 0.0}$ & 67.7$_{\pm 24.1}$ & 100.0$_{\pm 0.0}$ & 82.1$_{\pm 12.8}$ & 84.0$_{\pm 0.0}$ & 67.0$_{\pm 18.9}$ \\

    \bottomrule
    \end{tabular}
    \caption{Overall accuracy on relational reasoning problems. All models are trained on $n=10$, and tested on $n=30$. The standard error of all values are computed based on three random seeds.}
    \label{tab:substructures}
\end{table}
\begin{table}[t]
    \centering
    \small
    \setlength{\tabcolsep}{2pt}
    \begin{tabular}{cccccccccc}
    \toprule
      & & \multicolumn{2}{c}{grand parent} & \multicolumn{2}{c}{uncle} & \multicolumn{2}{c}{connectivity-4\footnote{Predict whether a pair of nodes are connected within 4 edges.}} & \multicolumn{2}{c}{connectivity} \\

      Model & Agg. & $n=20$ & $n=80$ & $n=20$ & $n=80$ & $n=10$ & $n=80$ & $n=10$ & $n=80$  \\
      \midrule

      \multirow{2}{*}{1-ary GNN} & Max &
84.0$_{\pm 0.3}$ & 64.8$_{\pm 0.0}$ & 93.6$_{\pm 0.3}$ & 66.1$_{\pm 0.0}$ & 72.6$_{\pm 3.6}$ & 67.5$_{\pm 0.5}$ & 85.6$_{\pm 0.3}$ & 75.1$_{\pm 1.9}$ \\

       & Sum &
84.7$_{\pm 0.1}$ & 64.4$_{\pm 0.0}$ & 94.3$_{\pm 0.2}$ & 66.2$_{\pm 0.0}$ & 79.6$_{\pm 0.1}$ & 68.3$_{\pm 0.1}$ & 87.1$_{\pm 0.3}$ & 75.0$_{\pm 0.2}$ \\

      \multirow{2}{*}{2-ary NLM} & Max &
82.3$_{\pm 0.5}$ & 65.6$_{\pm 0.1}$ & 93.1$_{\pm 0.0}$ & 66.6$_{\pm 0.0}$ & 91.2$_{\pm 0.2}$ & 51.0$_{\pm 0.6}$ & 88.9$_{\pm 2.6}$ & 67.1$_{\pm 4.8}$ \\

       & Sum &
82.9$_{\pm 0.1}$ & 64.6$_{\pm 0.1}$ & 93.4$_{\pm 0.0}$ & 66.7$_{\pm 0.2}$ & 96.0$_{\pm 0.4}$ & 68.3$_{\pm 0.5}$ & 84.0$_{\pm 0.0}$ & 71.9$_{\pm 0.0}$ \\

      \midrule

      \multirow{2}{*}{2-ary GNN} & Max &
100.0$_{\pm 0.0}$ & 100.0$_{\pm 0.0}$ & 100.0$_{\pm 0.0}$ & 100.0$_{\pm 0.0}$ & 100.0$_{\pm 0.0}$ & 100.0$_{\pm 0.0}$ & 84.0$_{\pm 0.0}$ & 71.9$_{\pm 0.0}$ \\

       & Sum &
100.0$_{\pm 0.0}$ & 35.7$_{\pm 0.0}$ & 100.0$_{\pm 0.0}$ & 33.9$_{\pm 0.0}$ & 100.0$_{\pm 0.0}$ & 51.3$_{\pm 5.3}$ & 84.0$_{\pm 0.0}$ & 71.9$_{\pm 0.0}$ \\

      \multirow{2}{*}{3-ary NLM} & Max &
100.0$_{\pm 0.0}$ & 100.0$_{\pm 0.0}$ & 100.0$_{\pm 0.0}$ & 100.0$_{\pm 0.0}$ & 100.0$_{\pm 0.0}$ & 100.0$_{\pm 0.0}$ & 100.0$_{\pm 0.0}$ & 100.0$_{\pm 0.1}$ \\

       & Sum &
100.0$_{\pm 0.0}$ & 35.7$_{\pm 0.0}$ & 100.0$_{\pm 0.0}$ & 50.8$_{\pm 29.4}$ & 100.0$_{\pm 0.0}$ & 77.8$_{\pm 11.8}$ & 100.0$_{\pm 0.0}$ & 88.2$_{\pm 8.0}$ \\

      \multirow{2}{*}{3-ary NLM$_{\text{HE}}$}
      & Max &
100.0$_{\pm 0.0}$ & 100.0$_{\pm 0.0}$ & 100.0$_{\pm 0.0}$ & 100.0$_{\pm 0.0}$ & N/A & N/A & N/A & N/A \\

       & Sum &
100.0$_{\pm 0.0}$ & 35.7$_{\pm 0.0}$ & 100.0$_{\pm 0.0}$ & 33.8$_{\pm 29.4}$ & N/A & N/A & N/A & N/A \\

    \bottomrule
    \end{tabular}
    \vspace{4pt}
    \caption{Overall accuracy on relational reasoning problems. Models for family-relationship prediction are trained on $n=20$, while models for connectivity problems are trained on $n=10$. All model are tested on $n=80$. The standard error of all values are computed based on three random seeds. The 3-ary NLMs marked with ``HE'' have hyperedges in inputs, where each family is represented by a 3-ary hyperedge instead of two parent-child edges, and the results are similar to binary edges.}
    \label{tab:relations}
\end{table}

Our main results on all datasets are shown in \tbl{tab:substructures} and \tbl{tab:relations}. We empirically compare relational neural networks with different maximum arity $B$, different model architecture (GNN and NLM), and different aggregation functions (max and sum). All models use sigmoidal activation for all MLPs. For each task on both datasets we train on a set of small graphs ($n = 10$) and test the trained model on both small graphs and large graphs ($n=10$ and $n = 30$). We summarize the findings below.

\xhdr{Expressiveness.} We have seen a theoretical equal expressiveness between GNNs and NLMs applied to hypergraphs. That is, a GNN applied to $B$-ary hyperedges is equivalent to a $(B+1)$-ary NLM. Table \ref{tab:substructures} and \ref{tab:relations} further suggest their similar performance on tasks when trained with gradient descent.

Formally, triangle detection requires \modelp with at least $B=3$ to solve. Thus, we see that all \modelp with arity $B=2$ fail on this task, but models with $B=3$ perform well.
Formally, 4-clique is realizable by \modelp with maximum arity $B = 4$, but we failed to reliably train models to reach perfect accuracy on this problem. It is not yet clear what the cause of this behavior is.
\begin{figure}[tp]
    \centering
    \includegraphics[width=0.75\textwidth]{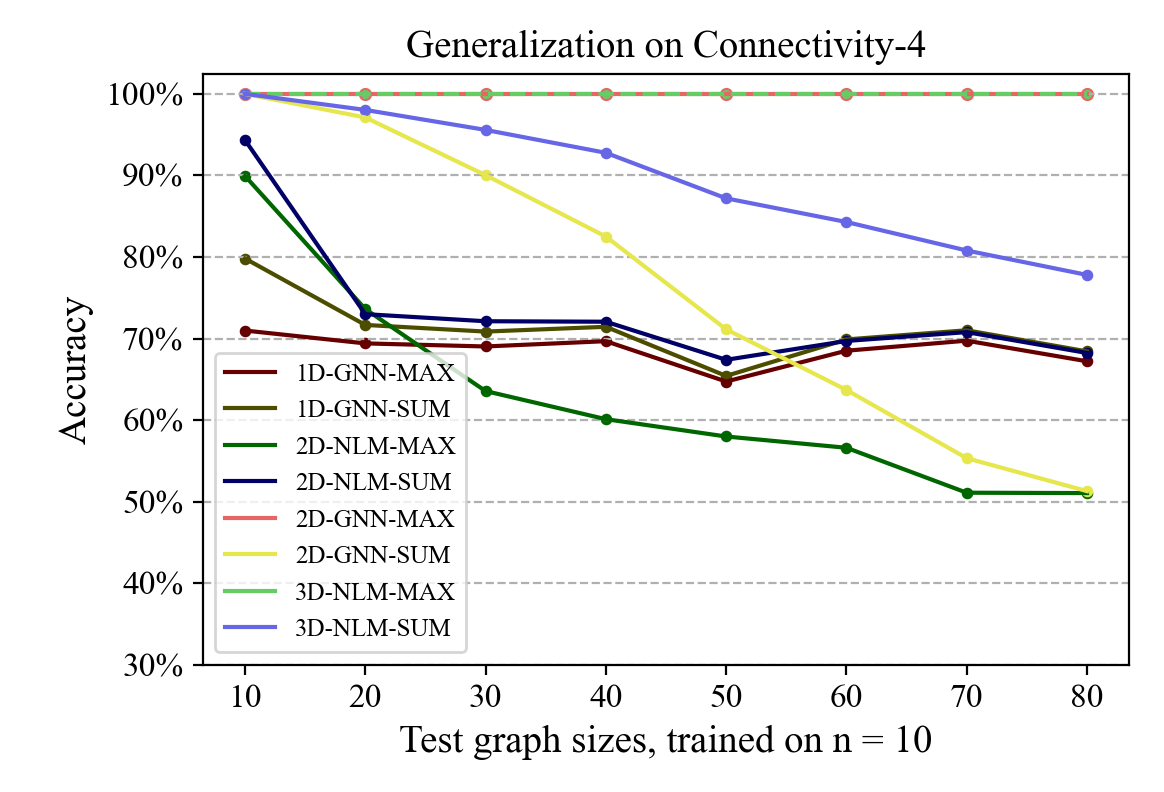}
    \caption{How the performance of models drop when generalizing to larger graphs on the problem connectivity-4 (trained on graphs with size 10).}
    \label{fig:drop-acc}
\end{figure}
 \xhdr{Structural generalization.} We discussed the structural generalization properties of \modelp in \sect{sec:learn-gen-relnn}, in a learning setting based on fixed-precision networks and enumerative training. This setting can be {\it approximated} by training \modelp with max aggregation and sigmoidal activation on sufficient data.

We run a case study on the problem connectivity-4 about how the generalization performance changes when the test graph size gradually becomes larger. Figure \ref{fig:drop-acc} show how these models generalize to gradually larger graphs with size increasing from 10 to 80. From the curves we can see that only models with sufficient expressiveness can get 100\% accuracy on the same size graphs, and among them the models using max aggregation generalize to larger graphs with no performance drop. 2-ary GNN and 3-ary NLM that use max aggregation have sufficient expressiveness and better generalization property. They achieve 100\% accuracy on the original graph size and generalize perfectly to larger graphs.

\end{document}